\icmltitlerunning{On the (Un-)Avoidability of Adversarial Examples}
\newtheorem{theorem}{Theorem}
\newtheorem{lemma}[theorem]{Lemma}
\newtheorem{definition}{Definition}
\newtheorem{observation}[theorem]{Observation}
   \newcommand{\reals}{\mathbb{R}}
   \newcommand{\naturals}{\mathbb{N}}
   \newcommand{\Ex}{\mathbb{E}}
   \renewcommand{\Pr}{\mathbb{P}}
   \newcommand{\Lo}[1]{{\mathcal L_{#1}}}
   \newcommand{\bLo}[1]{{\mathcal L^{0/1}_{#1}}}
   \newcommand{\rLo}[2]{{\mathcal L^{#1}_{#2}}}
   \newcommand{\arLo}[2]{{\mathcal L^{{#1}-ar}_{#2}}}
   \newcommand{\lo}{\ell}
   \newcommand{\blo}{\ell^{0/1}}
   \newcommand{\rlo}[1]{\ell^{#1}}
   \newcommand{\arlo}{\ell^{ar}}
   \newcommand{\BLo}[1]{{\mathcal L^{B}_{#1}}}
   \newcommand{\rBLo}[1]{{\mathcal L^{rB}_{#1}}}
   \newcommand{\indct}[1]{\mathds{1}\left[{#1}\right]}
   \newcommand{\B}{{\mathcal B}}
   \renewcommand{\P}{{\mathcal P}}
   \newcommand{\A}{{\mathcal A}}
  \newcommand{\F}{{\mathcal F}}
  \renewcommand{\H}{{\mathcal H}}
  \newcommand{\X}{\mathcal{X}}
  \newcommand{\Y}{\mathcal{Y}}
  \newcommand{\e}{\mathrm{e}}
  \newcommand{\mar}[2]{\mathrm{mar}_{#2}^{#1}}
  \newcommand{\err}[1]{\mathrm{err}_{#1}}
  \newcommand{\iid}{i.i.d.~}
  \newcommand{\nn}{\mathrm{NN}}
  \newcommand{\rbh}[2]{h^{{#1}B}_{#2}}
  \newcommand{\bh}[1]{h^{B}_{#1}}
\begin{document}

\twocolumn[
\icmltitle{On the (Un-)Avoidability of Adversarial Examples}

\begin{icmlauthorlist}
\icmlauthor{Sadia Chowdhury}{york}
\icmlauthor{Ruth Urner}{york}
\end{icmlauthorlist}

\icmlaffiliation{york}{Lassonde School of Engineering, EECS Department, York University, Toronto, Canada}
\icmlcorrespondingauthor{Ruth Urner}{ruth@eecs.yorku.ca}
\icmlkeywords{Machine Learning Theory, Adversarial Loss, Adaptive Robustness}

\vskip 0.3in
]

\printAffiliationsAndNotice{}

\begin{abstract}
The phenomenon of adversarial examples in deep learning models has caused substantial concern over their reliability. While many deep neural networks have shown impressive performance in terms of predictive accuracy, it has been shown that in many instances an imperceptible perturbation can falsely flip the network's prediction. Most research has then focused on developing defenses against adversarial attacks or learning under a worst-case adversarial loss. In this work, we take a step back and aim to provide a framework for determining whether a model's label change under small perturbation is justified (and when it is not). We carefully argue that adversarial robustness should be defined as a locally adaptive measure complying with the underlying distribution. We then suggest a definition for an adaptive robust loss, derive an empirical version of it, and develop a resulting data-augmentation framework. We prove that our adaptive data-augmentation maintains consistency of $1$-nearest neighbor classification under deterministic labels and provide illustrative empirical evaluations.
\end{abstract}

\section{Introduction}
\label{intro}
Deep learning methods have enjoyed phenomenal successes on wide range of applications of predictive tasks  in the past decade.
However, it has been demonstrated that, while these networks are often highly accurate at making predictions on natural data inputs, the performance can degrade drastically when inputs are slightly manipulated \citep{SzegedyZSBEGF13}. Flipping a few pixels in an image, a perturbation that is not perceivable by humans, can lead to misclassification by the trained network. These unexpected, and seemingly erratic behaviors of deep learning models have caused substantial concern over their reliability and trustworthiness. 
Particularly so, if these models are to be employed in applications where vulnerability to manipulations may have fatal consequences (for example if learning based vision technologies are to be employed in self-driving cars).
Recent years have seen a surge in studies aiming to enhance robustness of deep learning \citep{ChakrabortySurvey2018arxiv,GoodfellowMP18,akhtar2018threat}.
Practical approaches are often aimed at smoothing either the trained model or the training data: By data-augmentation the training data gets artificially augmented with perturbations of natural inputs as a way to promote robustness of the model during training \citep{YangZCWCL19,yu2020pda}. Alternatively, a trained model gets smoothed during post-processing, so as to not suffer sudden switches of the output class in areas where natural inputs occur \citep{CohenRK19,SalmanLRZZBY19}. 

Theoretical studies on the problem of adversarial robustness have often focused on exploring how adversarial robustness can be phrased in terms of a modified loss function and how this modified notion of loss affects learnability, both in terms of statistical  and computational aspects \citep{MontasserHS19,YinRB19,GourdeauKK019,montasser2020efficiently,AshtianiPU20}.
However, both theoretical studies and practical heuristics developed in the context of promoting robustness to adversarial attacks, are typically aimed at a fixed notion of smoothness with a fixed degree of perturbations that the model should be made robust to.

In this work, we take a step back, and analyze when a robustness requirement is plausible with respect to the underlying data-generating process. 
It has been observed before that a requirement of hard margins on a learned predictor (enforcing the learned predictor to assign constant output label in balls of fixed size around input points) can be at odds with achieving high accuracy, even if the data-generating distribution, in principle allows for accurate prediction \citep{diochnos2018adversarial,GourdeauKK019}. In this work, we formally argue that robustness requirements should be aligned with the underlying data-generating process, and that such an alignment inherently requires a \emph{locally adaptive notion of robustness, that is, a locally adaptive robust loss}.

More specifically, we introduce a new notion of separability of a distribution, the \emph{margin rate} of the distribution. The margin is a function that measures how much probability mass is assigned to areas that are close to the decision boundary of a (certain type of canonical) Bayes classifier. 
We prove that, given the margin rate of a distribution, a robustness parameter can be chosen so that the optimal predictors have similar loss values (in terms of classification and robust loss). 
However, we also show that choosing the robustness parameter slightly too large, can result in the optimal predictors disagreeing on a proportion of probability mass $1/2$. 
This implies that if the robustness parameter is chosen even slightly too large for the data-generating process at hand, any learning method that is \emph{consistent} (converges to the best possible loss as training data set size increases) \emph{with respect to one loss} is \emph{not consistent with respect to the other}.

This motivates our proposition of redefining the robustness requirement. 
We argue that \emph{robustness is inherently a local property} and that learned predictors should thus satisfy a local notion of robustness that is in line with the underlying data-generating process.
While such a requirement can not readily be phrased as a loss function (that operates on a pair of predictor and input/output data instance), we derive a natural empirical version of this requirement.
This allows for evaluating the requirement on datasets. 
Further, we argue that our notion of locally adaptive robustness yields a natural paradigm for data augmentation, which adheres to the margin properties of the data-generating distribution. We prove that using this form of data-augmentation as a pre-processing step maintains consistency of $1$-nearest neighbor classification on tasks without stochasticity in the labels.

Finally, in Appendix Section \ref{s:experiments} we present a set of \emph{illustrative experiments} for the proposed data-augmentation method and adaptive robust loss in combination with training a ReLU neural network.
The synthetic datasets were designed so as to  highlight the occurrence of adversarial examples when the  data sits on a lower dimensional manifold, a scenario that is considered one of the sources adversarial vulnerability \citep{KhouryHadfield19}. Our experiments visually make the case for the adaptive robust loss in  situations where the label classes have \emph{ different degrees of separation in different parts of the space}.
For lack of space in this extended abstract, we also discuss related work in detail in the Appendix Section \ref{s:relatedwork}.

\section{Formal Setup}\label{s:setup_short}
We provide a full formal setup Section \ref{s:setup} in the Appendix. Here we summarize essential notation.
We let  $\X\subseteq\reals^d$ denote the domain and $\Y=\{0,1\}$ the label space.
We assume that data is generated by some distribution $P$ over $\X\times \Y$.
We say that the distribution has \emph{deterministic labels} if $\Pr_{(x,y)\sim P}[y = 1 \mid x] \in \{0,1\}$ for all $x\in\X$.
A \emph{classifier} or \emph{hypothesis} is a function $h:\X\to\Y$.
We let $\F$ denote the set of all Borel measurable functions $f:\X\to\Y$.
The quality of prediction of a hypothesis on $(x,y)$ is measured by a \emph{loss function} $\lo:(\F\times \X \times \Y) \to \reals$,
for classification problems, typically with the \emph{binary} or \emph{classification loss}: 
\[
\blo(h, x, y) = \indct{h(x) \neq y}.
\]
We denote the \emph{expected loss} (or \emph{true loss}) of a hypothesis $h$ with respect to the distribution $P$ and loss function $\lo$ by $\Lo{P} (h) = \Ex_{(x,y)\sim P} [\lo(h , x, y)]$. 
In particular, we will denote the true binary loss by $\bLo{P}(h)$.
The \emph{empirical loss} of a hypothesis $h$ with respect to loss function $\lo$ and a sample $S = ((x_1, y_1), \ldots, (x_n,y_n))$ is defined as $\Lo{S}(h) = \frac{1}{n}\sum_{i=1}^n \lo(h, x_i, y_i)$.

We consider the most commonly used notion of an (adversarially) robust loss \citep{MontasserHS19,Kamalika2019arxiv}.
For a point $x\in \X$, we let $\B_r(x)$ denote the (open) ball of radius $r$ around $x$.
We then define the robust loss as:
\[
 \rlo{r}(h, x, y) = \indct{\exists z\in \B_{r} ~:~ h(z) \neq y}.
\]
and we let $\rLo{P}{r}(h)$ denote the expected robust loss of $h$.
We have $\rlo{r}(h, x, y) =1$ when $(x,y)$ falls into the \emph{error region}, 
$\err{h} = \{(x,y) \in X\times Y)~\mid~ h(x)\neq y\}$,
or when $x$ lies in the \emph{margin area} $\mar{r}{h}$ of $h$, which we define as:
$\mar{r}{h} =  \{x\in \X ~\mid~ \exists z\in\B_r(x): h(x)\neq h(z)\}.$
The \emph{Bayes classifier} is a classifier that has the minimal true loss with regard to $P$.
We denote the Bayes classifier with respect to the binary loss as $\bh{P}$ and it's loss, the \emph{Bayes risk} by $\BLo{P} = \bLo{P}(h_P^B)$.
We denote the robust-Bayes classifier by $h^{rB}_P$ and the robust-Bayes risk by $\rBLo{P} = \rLo{r}{P}(h^{rB}_P)$.

\section{Relaxations of separability and the margin canonical Bayes}\label{s:relaxedmargins}
 
It has been shown in the literature, that choosing a fixed, unsuitable robustness parameter can lead to inconsistencies between optimaility of binary and robust loss requirements. We review and refine some of these results in the Appendix, Section \ref{ss:binaryversusrobust}.
There we also review that if the distribution is separable (in the sense that $P_\X(\mar{r}{\bh{P}}) = 0$, for some $0/1$-optimal classifier $\bh{P}$), then the robust optimal and $0/1$ optimal predictors coincide. 
However, this is a very strong separability assumption. We start here by relaxing this requirement and showing that, one can choose the robustness parameter $r$ in dependence on ``how separable'' (in a precise sense that we introduce next) the distribution $P$ is and on how close we would like the optimal predictors to be. 

\subsection{Choosing a robustness parameter}\label{ss:choosingr}

Note that, for a fixed predictor $h$, we have $P_\X(\mar{r}{h}) \geq P_\X(\mar{r'}{h})$ if $r \geq r'$. Thus, the function
\[
 \phi_P^h(r) = P_\X(\mar{r}{h})
\]
will monotonically decrease to $0$ as $r$ goes to $0$ for any predictor $h$.
If $h$ is a Bayes predictor, then the rate at which $\phi_P^h(r)$ converges to $0$ as $r\to 0$, can be viewed as a measure of ``how separable'' the data- generating process is, that is, how fast the density of the marginal $P_\X$ vanishes towards the boundary between the two label classes. However, since the Bayes predictor is generally not uniquely defined, we need to specify which Bayes predictor should be employed to measure the separability of the distribution. For simplicity, we will assume here that we have $\mu_P(x) = 0.5$ for the regression function only on a set of measure $0$, and define a margin-canonical Bayes predictor as follows: We let $\X^0\subseteq \X$ denote the closure of the part of the space, where all Bayes classifiers assign label $0$, and let $\X^1\subseteq \X$ the closure of the part of the space where all Bayes classifiers assign label $1$. That is, under the above assumption, the support of the marginal $P_\X$ is $\X^0\cup\X^1$.

We can now define a \emph{margin-canonical Bayes classifier} $\bh{P}$ by nearest neighbor labeling with respect to the sets $\X^0$ and $\X^1$. We only need to specify $\bh{P}(x)$ for points $x$ that are outside the support of $P_\X$. By definition, there exists a ball of some radius $r$ around such a point $x$ that has has no probability mass: $P_\X(\B_r(x)) = 0$. Thus, $x$ has positive distance to both $\X^0$ and $\X^1$ and we will set $\bh{P}(x) = i$ if $\X^i$ is the closer set to $x$, breaking ties arbitrarily. We note that our definitions and results in subsequent sections also hold for the margin rate of any other Bayes classifier. 

\begin{definition}[Margin rate]\label{d:marginrate}
Let $P$ be a distribution over $\X \times \{0,1\}$ and let $\bh{P}$ be the margin-canonical Bayes classifier.
Then we define \emph{margin-rate} of $P$ as the function
$\Phi_P(r) = \phi_P^{\bh{P}}(r).$
If there exists an $r>0$ such that $\Phi_P(r) = 0$, then we call the distribution $P$ \emph{strongly separable}.
\end{definition}

The margin rate is related the notion of \emph{Probabilistic Lipschitzness} \citep{UrnerWB13} and the \emph{geometric noise exponent} \citep{Steinwart_2007}. 
We now show that the margin rate can be used to choose a robustness parameter for which the optimal robust predictor has close to optimal classification loss and vice versa. If the labels of the distribution are deterministic, then we also get closeness as functions of the optimal predictors.

\begin{theorem}\label{thm:chooser}
 Let $P$ be a data-generating distribution over $\X\times \{0,1\}$, let $\Phi_P: \reals^{+} \to [0,1]$ denote its margin rate, and let $\bh{P}$ denote the $0/1$-optimal classifier defining the margin rate. 
 For every $\epsilon >0$, if we let $r\in \Phi_P^{-1}([0,\epsilon])$, then for any $r$-robust optimal classifier $\rbh{r}{P}$ we have
 \[
  \rLo{r}{P}(\bh{P}) \leq \rBLo{P} + \epsilon \quad\text{and}\quad \bLo{P}(\rbh{r}{P}) \leq \BLo{P} + \epsilon.
 \]
 In addition, if the labeling of $P$ is deterministic, we have
 \[
  P_\X[\bh{P} ~\Delta~ \rbh{r}{P}] \leq \epsilon.
 \] 
\end{theorem}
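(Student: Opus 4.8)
The plan is to reduce all three inequalities to a single elementary pointwise comparison between the robust and binary losses, and then chain that comparison against the optimality of the two Bayes-type predictors.

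First I would record the basic fact that for \emph{any} classifier $h$ and any $(x,y)$ one has $\rlo{r}(h,x,y) \le \blo(h,x,y) + \indct{x \in \mar{r}{h}}$. Indeed, if $x \notin \mar{r}{h}$ then, by the definition of the margin area, $h$ is constant on $\B_r(x)$, so $\rlo{r}(h,x,y) = \indct{h(x)\neq y} = \blo(h,x,y)$; and if $x\in\mar{r}{h}$ the inequality is trivial because its right-hand side is at least $1$. Taking expectations over $P$ gives $\rLo{r}{P}(h) \le \bLo{P}(h) + P_\X(\mar{r}{h})$. The reverse-direction fact, $\blo \le \rlo{r}$ pointwise, gives $\bLo{P}(h) \le \rLo{r}{P}(h)$ for every $h$, and hence, minimizing over $h$, $\BLo{P} \le \rBLo{P}$.

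Next I would apply the first fact to $h = \bh{P}$. Since $P_\X(\mar{r}{\bh{P}}) = \Phi_P(r) \le \epsilon$ by the choice $r \in \Phi_P^{-1}([0,\epsilon])$, this gives $\rLo{r}{P}(\bh{P}) \le \BLo{P} + \epsilon \le \rBLo{P} + \epsilon$, which is the first claimed inequality. For the second, I use that $\rbh{r}{P}$ is robust-optimal: $\rBLo{P} = \rLo{r}{P}(\rbh{r}{P}) \le \rLo{r}{P}(\bh{P}) \le \BLo{P} + \epsilon$, and combining this with $\bLo{P}(\rbh{r}{P}) \le \rLo{r}{P}(\rbh{r}{P})$ yields $\bLo{P}(\rbh{r}{P}) \le \BLo{P} + \epsilon$. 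For the deterministic case I would show that the disagreement region of the two predictors is, up to a $P_\X$-null set, contained in the error region of $\rbh{r}{P}$: under deterministic labels $\BLo{P} = 0$ and $\bh{P}(x) = y$ for $P$-almost every $(x,y)$ (the margin-canonical Bayes classifier coincides with the a.s.\ unique labeling function on $\supp(P_\X)$), so for $P$-almost every $(x,y)$ with $\bh{P}(x) \neq \rbh{r}{P}(x)$ we have $y = \bh{P}(x) \neq \rbh{r}{P}(x)$, i.e.\ $\rbh{r}{P}$ misclassifies $(x,y)$. Since the event $\{\bh{P}(x)\neq\rbh{r}{P}(x)\}$ depends only on $x$, this gives $P_\X[\bh{P} \,\Delta\, \rbh{r}{P}] \le \bLo{P}(\rbh{r}{P}) \le \rBLo{P} \le \rLo{r}{P}(\bh{P}) \le \epsilon$, using the bounds already established.

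The arguments are short, and the only points that need care are bookkeeping ones: (i) the identity $\rlo{r} = \blo$ off the margin area, which is immediate from the definition of $\mar{r}{h}$; and (ii) in the deterministic case, the fact that the labeling function, the margin-canonical Bayes classifier $\bh{P}$, and ``the ground truth'' agree $P_\X$-almost everywhere on $\supp(P_\X)$ — this is where the precise definition of $\bh{P}$ and the standing assumption that the regression function equals $1/2$ only on a null set get used. I do not expect any substantial obstacle beyond these.
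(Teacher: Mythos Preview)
Your proposal is correct and follows essentially the same route as the paper's proof: both decompose the robust loss of $\bh{P}$ into its binary loss plus the margin mass $\Phi_P(r)\le\epsilon$, use the trivial inequality $\bLo{P}\le\rLo{r}{P}$ to relate the two Bayes risks, chain through optimality of $\rbh{r}{P}$, and in the deterministic case observe that $\bLo{P}(\rbh{r}{P})\le\epsilon$ together with $\bh{P}(x)=y$ a.e.\ forces $P_\X[\bh{P}\,\Delta\,\rbh{r}{P}]\le\epsilon$. Your write-up is in fact slightly more explicit about the pointwise inequality $\rlo{r}\le\blo+\indct{x\in\mar{r}{h}}$, but the argument is the same.
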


We next argue that, while a separability assumption can yield closeness in loss values of the optimal predictors, it implies closeness of the actual functions only if the labeling is, in addition deterministic.
That is, the assumption of deterministic labels is \emph{necessary} for the second part of the above Theorem (Observation \ref{obs:deterministicnecessary}).

\begin{observation}\label{obs:deterministicnecessary}
Let $\epsilon > 0$ be given. 
There exists a data-generating distribution $P$ over $\reals^2 \times \{0,1\}$ with linear margin rate $\Phi_P: \reals^{+} \to [0,1]$, $\Phi_P(r) = 0.5r$ such that, for any $r\in \Phi_P^{-1}([0,\epsilon])$, we get 
$P_\X[\bh{P} ~\Delta~ \rbh{r}{P}] = \frac{1}{2}$
\end{observation}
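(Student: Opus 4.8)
The plan is to make the distribution maximally noisy (regression function identically $\tfrac{1}{2}$) on a region carrying probability mass exactly $\tfrac{1}{2}$; an $r$-robust optimal classifier is then forced to be constant there, and hence disagrees, on that whole region, with a suitably chosen margin-defining Bayes classifier, for every $r>0$.

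Concretely, let $P_\X$ be uniform on $[0,4]\times[0,1]\subseteq\reals^2$ (density $\tfrac{1}{4}$) and set $\eta(x):=\Pr_{(x,y)\sim P}[y=1\mid x]$ to be $0$ for $x_1<2$ and $\tfrac{1}{2}$ for $x_1\ge 2$. Since $\eta\equiv\tfrac{1}{2}$ on a set of positive mass the Bayes classifier is not unique; I choose $\bh{P}:=\indct{x_1\ge 2}$ --- Bayes optimal because label $0$ is optimal where $\eta=0$ and every label is optimal where $\eta=\tfrac{1}{2}$ --- and use it to define the margin rate, which is legitimate since the results hold for the margin rate of any Bayes classifier. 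Its only decision boundary is $\{x_1=2\}$, so $\mar{r}{\bh{P}}=\{x:|x_1-2|<r\}$ and $\Phi_P(r)=P_\X(\mar{r}{\bh{P}})=\tfrac{1}{4}\cdot 2r=0.5\,r$ for $r\le 2$ (for larger $r$ the strip covers the whole support; only $r\in\Phi_P^{-1}([0,\epsilon])=(0,2\epsilon]$ with $\epsilon<1$ is ever relevant).

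Next I identify the robust optimal classifier. The constant classifier $h_0\equiv 0$ satisfies $\rlo{r}(h_0,x,y)=\indct{y=1}$, hence $\rLo{r}{P}(h_0)=\Pr[y=1]=\Ex[\eta]=\tfrac{1}{2}\cdot\tfrac{1}{2}=\tfrac{1}{4}$; and $\rLo{r}{P}(h)\ge\bLo{P}(h)\ge\BLo{P}=\Ex[\min(\eta,1-\eta)]=\tfrac{1}{4}$ for every $h$, so $\rBLo{P}=\tfrac{1}{4}$, attained by $h_0$. Moreover $h_0$ is, up to $P_\X$-null modifications, the only robust optimal classifier: any $h$ not a.e.\ $0$ on (an $r$-neighborhood of) $[0,2)\times[0,1]$ incurs positive robust loss where $\eta=0$, and any $h$ not $P_\X$-a.e.\ constant on $[2,4]\times[0,1]$ has an $r$-margin set of positive $P_\X$-measure there, pushing its robust loss above $\tfrac{1}{4}$. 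Taking $\rbh{r}{P}:=h_0$, we get $\{x:\bh{P}(x)\ne\rbh{r}{P}(x)\}=\{x:\bh{P}(x)=1\}=\{x_1\ge 2\}$, of $P_\X$-mass $\tfrac{1}{4}\cdot 2=\tfrac{1}{2}$, for every $r>0$ and in particular for every $r\in\Phi_P^{-1}([0,\epsilon])$.

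The only nontrivial step is the optimality/uniqueness analysis of the robust loss on $[2,4]\times[0,1]$ --- specifically the bound $\rLo{r}{P}(h)\ge\tfrac{1}{4}+\tfrac{1}{2}\,P_\X\!\left(\mar{r}{h}\cap([2,4]\times[0,1])\right)$ once $h$ has been pinned to $0$ near $[0,2)\times[0,1]$, together with the standard fact that a set of nontrivial $P_\X$-measure inside an open region has a positively-measured $r$-fattened boundary; everything else is direct computation. Conceptually the example is tight: from $\bLo{P}(\rbh{r}{P})\le\rLo{r}{P}(\rbh{r}{P})=\rBLo{P}\le\BLo{P}+\Phi_P(r)$ one obtains $\int_{\{\bh{P}\ne\rbh{r}{P}\}}|2\eta-1|\,dP_\X\le\Phi_P(r)$, so disagreement can only occur where $\eta$ is near $\tfrac{1}{2}$; with deterministic labels this forces the disagreement set to be $P_\X$-null (recovering the third conclusion of Theorem~\ref{thm:chooser}), whereas here it carries mass $\tfrac{1}{2}$ no matter how small $r$ is --- which is exactly why that hypothesis cannot be dropped.
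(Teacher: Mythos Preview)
Your proof is correct but follows a genuinely different construction from the paper's. The paper places $P_\X$ uniformly on two separated rectangles $R_1=[-2,-1]\times[-1,1]$ and $R_2=[1,2]\times[-1,1]$ with regression function $\mu=\tfrac{1}{2}\pm\tfrac{\epsilon}{2}$ depending on the sign of $x_2$; the $0/1$-optimal predictor is $\indct{x_2\ge 0}$, while the robust optimum exploits the gap between the rectangles and becomes $\indct{x_1\ge 0}$. Your construction instead uses a single connected rectangle with $\eta\in\{0,\tfrac{1}{2}\}$ and forces the robust optimum to be the constant $0$.

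Each route has its own merit. The paper's example keeps $\mu$ bounded away from $\tfrac{1}{2}$, so the margin-canonical Bayes classifier is well-defined under the paper's own simplifying assumption (that $\mu=\tfrac12$ only on a null set), and the $0/1$-optimum is unique on the support; this makes the point that non-determinism alone---not the degeneracy $\eta=\tfrac12$---breaks the third conclusion of Theorem~\ref{thm:chooser}. Your construction sacrifices that: since $\eta=\tfrac12$ on a set of mass $\tfrac12$, you must invoke the paper's remark that the margin rate may be taken with respect to \emph{any} Bayes classifier and then hand-pick $\bh{P}=\indct{x_1\ge 2}$. In exchange you get a cleaner argument: the margin rate comes out exactly as $0.5r$ (the paper's rectangles actually yield $\Phi_P(r)=r$, a minor slip), the robust optimum $h_0\equiv 0$ is trivial to verify since $\rLo{r}{P}(h_0)=\Pr[y=1]=\tfrac14=\BLo{P}$, and the conclusion holds for \emph{every} $r>0$ rather than only for $r$ above a threshold. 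Your closing paragraph, deriving $\int_{\{\bh{P}\ne\rbh{r}{P}\}}|2\eta-1|\,dP_\X\le\Phi_P(r)$ to explain why the obstruction must live where $\eta\approx\tfrac12$, is a nice structural addition that the paper does not make explicit.
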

Next, we argue that, even under deterministic labels, choosing a robustness parameter slightly larger than implied by Theorem \ref{thm:chooser}, can yield largely differing optimal predictors.  The proof is similar to that of Theorem \ref{thm:differ}.
\begin{observation}\label{obs:rtoolarge}
 Let $\epsilon > 0$ be given. 
 There exists a distribution $P$ over $\reals \times \{0,1\}$ that is strongly separable, such that, for any $r > \sup \Phi_P^{-1}([0,\epsilon])$, we have 
 $P_\X[\bh{P} ~\Delta~ \rbh{r}{P}] = \frac{1}{2}$.
\end{observation}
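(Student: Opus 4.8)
The plan is to take a one-dimensional distribution with deterministic labels supported on two equal-mass atoms sitting far apart, so that (i) the margin-canonical Bayes classifier puts its decision boundary strictly between them, (ii) its margin rate is identically $0$ below the half-distance and identically $1$ above it, and (iii) for every robustness radius above that threshold the two open balls around the atoms overlap, forcing any robust-optimal predictor to sacrifice one of the two atoms entirely.

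Concretely, let $P$ give mass $1/2$ to the point $x=-1$ with label $0$ and mass $1/2$ to the point $x=1$ with label $1$; using two tiny uniform intervals $[-1-\beta,-1]$ and $[1,1+\beta]$ in place of the atoms works identically and gives a nonatomic example if preferred. Then $\X^0=\{-1\}$ and $\X^1=\{1\}$, so the margin-canonical Bayes classifier is $\bh{P}(x)=\indct{x>0}$ (which is $0/1$-optimal, since it labels both atoms correctly), its margin area is $\mar{r}{\bh{P}}=(-r,r)$, and hence $\Phi_P(r)=P_\X((-r,r))$ equals $0$ for $r\le 1$ and $1$ for $r>1$. In particular $P$ is strongly separable. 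Since $\Phi_P$ never exceeds $1$, the claim is vacuous for $\epsilon\ge 1$, so assume $\epsilon<1$; then $\Phi_P^{-1}([0,\epsilon])=(0,1]$ and $\sup\Phi_P^{-1}([0,\epsilon])=1$.

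Now fix any $r>1$. For the upper bound on the robust-Bayes risk, the constant hypothesis $h\equiv 0$ satisfies $\rLo{r}{P}(h)=\Pr_{(x,y)\sim P}[y=1]=1/2$. For the matching lower bound, observe that under an arbitrary hypothesis $h$ the atom at $-1$ escapes robust loss only if $h\equiv 0$ on the open ball $\B_r(-1)$, and the atom at $+1$ escapes robust loss only if $h\equiv 1$ on $\B_r(1)$; because $r>1$ these two balls intersect, so at least one of the atoms incurs robust loss $1$ and therefore $\rLo{r}{P}(h)\ge 1/2$. Consequently $\rBLo{P}=1/2$, the choice $\rbh{r}{P}:=(h\equiv 0)$ is an $r$-robust optimal classifier, and $P_\X[\bh{P}~\Delta~\rbh{r}{P}]=P_\X(\{x:\bh{P}(x)=1\})=P_\X((0,\infty))=1/2$, which is the assertion.

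The main obstacle here is conceptual rather than computational: the robust-optimal classifier is far from unique once $r>1$ --- $h\equiv 1$ and various ``hybrid'' predictors that keep exactly one atom also attain robust loss $1/2$, and some of the latter even agree with $\bh{P}$ on $\supp(P_\X)$. As in the rest of this section, the statement must therefore be read as asserting the \emph{existence} of a robust-optimal predictor that disagrees with the $0/1$-canonical Bayes predictor on half the mass, which is exactly what is needed to preclude simultaneous consistency: any learner converging to this particular robust optimum fails to be $0/1$-consistent even though $P$ is strongly separable with deterministic labels. Everything else reduces to the two elementary computations above --- the margin area of $\bh{P}$ and the robust loss of $h\equiv 0$ --- together with the ball-overlap argument for the lower bound.
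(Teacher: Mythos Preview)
Your construction is exactly the two-atom example the paper has in mind when it says the proof is ``similar to that of Theorem~\ref{thm:differ}'': two equal-mass points with opposite deterministic labels, placed at distance $2$ so that $\Phi_P$ vanishes on $(0,1]$, the distribution is strongly separable, and for $r>1$ a constant predictor is $r$-robust optimal while differing from $\bh{P}$ on mass $1/2$. Your remark on non-uniqueness is a genuine subtlety the paper leaves implicit: in Theorem~\ref{thm:differ} the two atoms sit at distance strictly less than $r$, which forces \emph{every} robust optimum to label them identically, whereas in the present strongly separable version, for $1<r\le 2$ some robust optima still agree with $\bh{P}$ on $\supp(P_\X)$, so only the existential reading of $\rbh{r}{P}$ survives on the full range $r>1$---consistent with how the symbol is used elsewhere in the section, but worth flagging.
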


\subsection{Towards local robustness}\label{ss:towardslocalr}
We now argue that, even if the distribution is strongly separable and the labels are deterministic, then choosing a uniform robustness parameter may not result in the desired outcomes.
To see this, we consider a distribution over domain $\reals^2 \times \{0,1\}$, where the support is distributed uniformly on four points, $(-1, 0.9), (-1, 1.1), (1, 0.9), (1,2)$.
Then predictor $h(x_1, x_2) = \indct{x_2 \geq 1}$ is $0/1$-optimal and also $r$-robust optimal for any $r \leq 0.1$. However, we may prefer a predictor $h^*$ that keeps a larger distance from the point $(1, -.1)$, see illustration in Figure \ref{fig:uniformr} and is equally optimal with respect to the $0.1$-robust loss.

\begin{figure}[h]
 \begin{center}
  \begin{tikzpicture}[scale =1.1]
         
         \foreach \Point in { (-1,1.1), (1, 2)}
         {
          \fill[color=red] \Point circle[radius=1.5pt];   
         }

         \foreach \Point in {(-1,0.9), (1, 0.9)}
         {
           \fill[color=blue] \Point circle[radius=1.5pt];   
         }

         \draw[->] (-3, 0) -- (3,0);          
         \draw[->] (0, -.3) -- (0,2.3);
                 
         \draw[thick, color = orange] (-3, 1) -- (3,1);    
         \draw (3, 1) node[right, color=orange]{$\rbh{0.1}{P}$};
         \draw[dotted, thick, color = green] (-3, 0.5) -- (-1.5,1) -- (-.5,1) -- (3, 2); 
         \draw (3, 2) node[right, color=green]{$h^*$};
  \end{tikzpicture} 
  \end{center}
\caption{Uniform robustness requirement unsuitable.}
\end{figure}
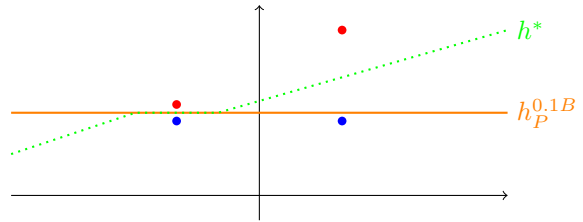\label{fig:uniformr}

\section{Redefining the Robustness Requirement}\label{s:redefiningrobustness}

We have argued (Sections \ref{ss:binaryversusrobust} and \ref{ss:choosingr}) that using a fixed robustness parameter $r$ can lead to inconsistencies (in the sense that the optimal predictors with respect to binary and robust differ vastly) and that even under conditions where the optimal predictors can coincide (strong separability or suitably chosen robustness parameter), optimizing for the robust loss can lead to classifiers that do not reflect our intuition about an optimally robust predictor (Section \ref{ss:towardslocalr}). Ideally we would like a learned predictor to be \emph{everywhere as robust as possible}. We will next formalize this intuition using the notions developed in the previous section. 

\subsection{A local robustness objective}

We propose to phrase robustness in relation to a margin-canonical Bayes predictor. A learned predictor should assign constant label in a ball $\B_r(x)$ around a point $x$ if a margin-canonical Bayes predictor does so. For a predictor $h$ and $x\in\X$, we let $\B^h(x)$ denote the largest ball around $x$ on which $h$ assigns a constant label (possibly $\B^h(x) = \{x\}$).

\begin{definition}[Adaptive robustness]
Let $P$ be a data-generating distribution $\bh{P}$ denote a margin-canonical Bayes predictor, and $h$ an arbitrary predictor.
We define the \emph{adaptive robust loss} $\arlo$ as
\[
 \arlo(h, x, y) = \indct{h(x)\neq y ~\lor~ \B^{\bh{P}}(x) \nsubseteq \B^h(x) }
\]
\end{definition}

This definition implies that, at least for $\bh{P}$ the robust loss coincides with the binary loss.
We note that, similar to the requirement that a predictor should be accurate in a ball of fixed radius, the above proposed loss is not technically a valid loss function, since it depends on $\bh{P}$ rather than just on $h, x$ and $y$. This implies that it can not straightforwardly be estimated from a data-sample. However, we next propose a substitute notion of empirical adaptive robust loss.

\subsection{Empirical adaptive robust loss}
Let $S = ((x_1, y_1), \ldots, (x_n, y_n))$ be a labeled dataset.
For a labeled domain  point $(x,y)$ we let $\rho_S(x)$ denote the distance from $x$ to its nearest neighbor with opposite (or different in the case of more than two classes) label in $S$:
\[
 \rho_S(x,y) = \min_{i\in [n]} \{\|x_i - x\| ~\mid~ (x_i, y_i)\in S, y_i \neq y \}.
\]
In the (degenerate) case that no such point in $S$ has a label different from $y$ (that is, all points in $S$ have the same label), we set $\rho_S(x,y)$ to $\infty$ (or the diameter of the space).
Note that $\rho_S(x,y)$ is well defined for points $(x,y)=(x_i,y_i)\in S$ from the dataset $S$ itself.
We now expand the dataset $S$ by replacing each point with a (constant labeled) ball of radius $c\cdot\rho_S(x_i,y_i)$, for some (to be chosen) constant $c$.

\begin{definition}[$c$-Adaptive robust expansion]
Let $S = ((x_1, y_1), \ldots, (x_n, y_n))$.
We call the collection
\[
 S^c = (\B_{c\cdot\rho_S(x_1,y_1)}(x_1, y_1), \ldots, \B_{c\cdot\rho_S(x_n,y_n)}(x_n, y_n))
\]
the \emph{$c$-adaptive robust expansion of $S$}.
\end{definition}

It is easy to see that, as long as $c\leq 1/2$, balls in the $c$-adaptive robust expansion of $S$ overlap only if they have the same label.
Thus, this expansion does not introduce any inconsistencies in the label requirements.
Depending on the geometry of the data-generating process (eg. the curvature of the decision boundary of the regression function) we may also employ larger expansion parameters without introducing inconsistencies.
Using the \emph{$c$-adaptive robust expansion of $S$}, we can define an empirical version of the adaptive robust risk for fixed parameter $c$.
For this, for a predictor $h:\X \to \Y$ and label $y$, we let $h^{-1}(y) \subseteq \X$ denote the part of the domain that $h$ labels with $y$.

\begin{definition}[Empirical $c$-adaptive robust loss]
 Let $c$ be an expansion parameter, $S = ((x_1,y_1),$ \raggedright$ \ldots, (x_n, y_n))$ and $h:\X\to\Y$.
 We define the \emph{empirical $c$-adaptive robust loss of $h$ on $S$} as
 \[
 \arLo{c}{S}(h) = \frac{1}{n}\sum_{i=1}^n \indct{\B_{c\cdot\rho_S(x_i,y_i)}(x_i, y_i) \nsubseteq h^{-1}(y_i)}
 \]
\end{definition}
That is, a point $(x_i, y_i)\in S$ is counted towards the empirical $c$-adaptive robust empirical risk, if $h$ does not label the whole ball $\B_{c\cdot\rho_S(x_i,y_i)}(x_i, y_i)$ in the expanded set with $y_i$.

\subsection{Adaptive robust data-augmentation}

While the empirical $c$-adaptive robust risk is well defined for any predictor $h$ and dataset $S$, it may, computationally, not be straightforward to verify the condition $\indct{\B_{c\cdot\rho_S(x,y)}(x, y) \nsubseteq h^{-1}(y)}$.
A natural estimate is to use $m$ uniform sample points $z^1, \ldots, z^m$ from the ball $\B_{c\cdot\rho_S(x,y)}(x)$ and verify whether $h$ labels all of these with $y$.
Similarly, for training purposes, we may want to use an sample version of the $c$-adaptive robust expansion of $S$. We call this the \emph{$m$-sample-$c$-adaptive robust augmentation of $S$}.
The so augmented dataset $S^{mc}$ is a set of labeled domain points and can be used as a training data-set for a learning algorithm.

\begin{definition}[Adaptive robust data augmentation]
Let  $S = ((x_1, y_1), \ldots,$ \raggedright $(x_n, y_n))$ be a labeled dataset, and $m\in\naturals$.
We call the collection
\[
 S^{mc} = ((z^1_1, y_1), \ldots, (z^m_1, y_1), \ldots (z^1_n, y_n), \ldots, (z^m_n, y_n)),
\]
where every $z^j_i$ is uniformly sampled from the ball $\B_{c\cdot\rho_S(x_i,y_i)}(x_i)$,
the \emph{$m$-sample-$c$-adaptive robust augmentation of $S$}.
\end{definition}

We conjecture that learners, that are consistent with respect to binary loss, remain consistent when fed a $c$-adaptive robust augmentation of $S$ for $c \leq 1/2$. 
We prove this for a $1$-nearest neighbor classification under deterministic labels. 
This result serves as evidence that our adaptive data augmentation does not induce any inconsistencies with the accuracy requirements.
It holds for a $c$-robust augmentation and any $m$-sample-$c$-robust augmentation if $c\leq 0.5$.
\begin{theorem}\label{thm:NNwithaugmentation}
Let $P$ be a distribution over $[0,1]^d\times\{0,1\}$ with deterministic labels and margin rate $\Phi_P(r)$. Let $\epsilon, \delta >0$ be given.
 Then, with probability at least $1-\delta$ over an is an \iid sample $S$ of size 
 $n \geq \frac{3^dd^{0.5d}}{\e \Phi_P^{-1}(\epsilon)^d \epsilon \delta }$
from $P$, the a $1$-nearest neighbor predictor $h^{0.5}_{\nn}$ on a $m$-sample-$0.5$-adaptive robust augmentation of $S$ satisfies $\bLo{P}(h^{0.5}_{\nn}) $ \raggedright $\leq \epsilon$ for any $m \geq 1$.
\end{theorem}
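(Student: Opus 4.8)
The plan is to reduce the claim to a grid‑covering statement about the training sample and then argue — crucially using that the expansion parameter is exactly $1/2$ — that the only test points the augmented $1$‑NN rule can mislabel are those whose grid cell the sample failed to hit.

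\emph{Covering step.} Set $r^\ast=\Phi_P^{-1}(\epsilon)$; by monotonicity of $\Phi_P$ we have $P_\X(\mar{r^\ast}{\bh P})\le\epsilon$. Partition $[0,1]^d$ into $N=(3\sqrt d/r^\ast)^d=3^d d^{0.5d}/(r^\ast)^d$ axis‑aligned subcubes of side $r^\ast/(3\sqrt d)$, each of diameter $r^\ast/3$. For a cube $C$ of mass $p_C=P_\X(C)$, the probability that $S$ misses $C$ is at most $e^{-np_C}$, so the expected $P_\X$-mass of the union of missed cubes is at most $\sum_C p_C e^{-np_C}\le N\max_{p\in[0,1]}p\,e^{-np}=N/(en)$. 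For $n\ge N/(e\epsilon\delta)$ this is at most $\epsilon\delta$, so by Markov's inequality, with probability $\ge1-\delta$ the union of missed cubes has $P_\X$-mass at most $\epsilon$. Fix such a good sample $S$; it then suffices to show that $h^{0.5}_{\nn}$ labels every support point $x$ lying in a hit cube by $\bh P(x)$ (the set $\{\mu_P=0.5\}$ is $P_\X$-null).

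\emph{Geometric core.} Two facts, both consequences of the expansion radius being $\tfrac12\rho_S(\cdot,\cdot)$, do the work. (i) If $x_i,x_j\in S$ carry different labels then $\rho_S(x_i,y_i),\rho_S(x_j,y_j)\le\|x_i-x_j\|$, so $\B_{\frac12\rho_S(x_i,y_i)}(x_i)$ and $\B_{\frac12\rho_S(x_j,y_j)}(x_j)$ are disjoint and each lies strictly on its own side of the perpendicular bisector of $x_i,x_j$; hence no point of $\X$ lies in (or is a perturbation of a sample point of) two differently‑labelled sources. (ii) For a support point $x$ in a hit cube, let $x_i\in S$ lie in that cube, so $\|x-x_i\|\le r^\ast/3$, and write $y^\ast=\bh P(x)$. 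When $\B_{r^\ast}(x)$ is $\bh P$-monochromatic we get $y_i=y^\ast$ by determinism, every oppositely‑labelled sample point lies outside $\B_{r^\ast}(x)$ and hence at distance $\ge 2r^\ast/3$ from $x_i$, so $x\in\B_{\frac12\rho_S(x_i,y_i)}(x_i)$; by (i), $x$ lies in no oppositely‑labelled expansion ball, and so the nearest expansion region to $x$ has label $y^\ast$. The same estimate shows every label‑$(1-y^\ast)$ perturbed point is at distance $\ge\tfrac12\|x-x_k\|-r^\ast/6\ge r^\ast/3$ from $x$ while $x_i$ itself is within $r^\ast/3$, which transports the conclusion to the $m$‑sample expansions as well. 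Thus $h^{0.5}_{\nn}(x)=y^\ast=\bh P(x)$, and combining with the covering step gives $\bLo P(h^{0.5}_{\nn})\le\epsilon$.

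\emph{Main obstacle.} The genuinely delicate point is part (ii) when $\B_{r^\ast}(x)$ is \emph{not} monochromatic, i.e. $x$ sits near the $\bh P$-boundary: unlike classical $1$‑NN, the perturbation radius $\tfrac12\rho_S(x_i,y_i)$ need not be small, so a perturbed point can be thrown far from its generating sample point, and one must rule out a displaced \emph{oppositely‑labelled} perturbed point reaching across the decision boundary to become the nearest neighbour of a near‑boundary $x$ whose cube happened to be hit from the far side. The inequality $\tfrac12\rho_S(x_i,y_i)\le\tfrac12\|x_i-x_j\|$ — precisely why $c=1/2$ is the natural threshold — pins each perturbed point to the correct side of every bisector it shares with an oppositely‑labelled sample point, and marrying this side‑constraint with the covering (so that the favourable side is in fact populated near $x$, pushing the augmented rule's decision boundary off of $x$) is the heart of the argument; the case of a single hit cube straddling $\bh P$'s boundary is where the estimates must be carried out most carefully, with the residual near‑boundary points absorbed into $\mar{r^\ast}{\bh P}$, whose mass is at most $\epsilon$ by the choice $r^\ast=\Phi_P^{-1}(\epsilon)$.
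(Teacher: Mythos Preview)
Your overall strategy matches the paper's exactly: cover $[0,1]^d$ by cubes of diameter $r^\ast/3$, use the standard cell lemma (the paper invokes Lemma~19.2 of Shalev--Shwartz/Ben-David) to bound the mass of unhit cells by $\epsilon$ with probability $1-\delta$, and show that any misclassified point in a \emph{hit} cell must lie in $\mar{r^\ast}{\bh{P}}$, whose mass is $\le\epsilon$ by choice of $r^\ast$. Your ``monochromatic $\B_{r^\ast}(x)$'' case is precisely the contrapositive of the paper's case C2, and your argument there (that $\rho_S(x_i,y_i)\ge 2r^\ast/3$ forces $x$ into the expansion ball of $x_i$, while disjointness keeps it out of every opposite-label ball) is a clean variant of the paper's triangle-inequality chain. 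Two organisational wrinkles: the sentence ``it then suffices to show that $h^{0.5}_{\nn}$ labels every support point $x$ lying in a hit cube by $\bh{P}(x)$'' overclaims---you later (correctly) absorb the non-monochromatic hit-cube points into the margin, so the final bound is $2\epsilon$, not $\epsilon$ (the paper has the same slip). And your ``Main obstacle'' paragraph is overwrought: once you split on whether $\B_{r^\ast}(x)$ is monochromatic, the non-monochromatic case is \emph{by definition} the event $x\in\mar{r^\ast}{\bh{P}}$, so there is nothing delicate left to do there.

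There is one genuine gap, in the step ``$x_i$ itself is within $r^\ast/3$, which transports the conclusion to the $m$-sample expansions as well.'' The augmented set $S^{mc}$ contains only the perturbed points $z_i^j$, not the original $x_i$; the lower bound you derive on the distance from $x$ to any opposite-label perturbation is correct, but you have no matching upper bound on the distance from $x$ to a \emph{same}-label perturbation, since $z_i^j$ may sit anywhere in a ball of radius $\tfrac12\rho_S(x_i,y_i)$, which can be much larger than $r^\ast/3$. (Concretely: with support $[0,0.2]\cup[9.8,10]$, a sample point $x_i\approx 0$ has expansion radius $\approx 4.9$, and an adversarial draw of $z_i^1\approx -4.9$ together with $z_j^1\approx 5$ makes the wrong-label point the nearest neighbour of $x=0.19$.) The paper glosses over this too---it proves the result for the full expansion $S^c$ and then asserts without argument that ``the proof holds equally for $S^{mc}$''---so you are not missing an idea that the paper supplies, but the step as written does not go through, and a correct treatment of the $m$-sample case would need either an additional probabilistic argument over the augmentation or a weaker statement.
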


\section*{Acknowledgements}
This work was supported by an NSERC discovery grant.

\bibliography{refs}
\bibliographystyle{icml2021}

\newpage
\appendix
\section{Related Work}\label{s:relatedwork}

Enhancing robustness to adversarial attacks has received an enormous amount of research attention in recent years, in particular in terms of practical advancements \citep{ChakrabortySurvey2018arxiv, GoodfellowMP18, akhtar2018threat, OnEvaluatingAdvRobArxiv, IlyasSTETM19}. We will focus our discussion of prior work on studies relating to theoretical aspects of learning under robust loss.

Numerous recent theoretical studies focus on the parametric setup and analyze how introducing a robustness requirement may affect statistical convergence of the induced loss classes \citep{cullina2018pac, schmidt2018adversarially, MontasserHS19, YinRB19, AshtianiPU20}, whereas others have focused on computational implications \citep{awasthi2019robustness, montasser2020efficiently}. In particular, that there can be arbitrarily large gaps between the sample complexity of learning a hypothesis with respect to  classification versus roust loss \citep{cullina2018pac, MontasserHS19}. Several studies have derived convergence bounds for classification under adversarial manipulations for fixed hypothesis classes \citep{feige2015learning, attias2018improved, BubeckLPR19}.

Most related to our work are recent studies that also discuss possible options (and their implications) for phrasing a robust loss \citep{diochnos2018adversarial,GourdeauKK019}, and in particular studies that pointed out and analyzes the trade-off between accuracy and robustness \citep{gal2018sufficient, TsiprasSETM19, YangRZSC20}. In particular, a recent study systematically explored the relationship between (a notion of local) Lipschitzness of a nearest neighbor predictor and its robustness. Further closely related to our work are recent studies that analyze and derive properties of optimal predictors under the robust loss and their relation to nearest neighbor predictors \citep{WangJC18, BhattacharjeeC20, YangRWC20}. The latter work studies non-parametric learning for robust classification and proposes a method of data-preprocessing, and, similar to our result for $1$-Nearest Neighbor prediction, proves implied consistency. However, the pre-processing in that study consists of pruning rather than augmenting the data. However, robustness in these prior works is considered with respect to a fixed robustness parameter. In this work, we carefully argue that adversarial robustness should instead be phrased as a locally adaptive requirement. Very recently, a similar argument has independently been made \citep{BCadaptiverobustness21}.

Finally, we note that relationship between non-parametric methods and local adaptivity is well established and our work builds on this. In particular, it has been shown shown that nearest neighbor methods' convergence can be understood and quantified in terms of local smoothness properties of the underlying data-generating process for regression \citep{Kpotufe11} as well as for classification tasks \citep{ChaudhuriD14}.

\section{Formal Setup}\label{s:setup}
\subsection{Basic notions of statistical learning}
We employ a standard setup of statistical learning theory for classification.
We let  $\X\subseteq\reals^d$ denote the domain and $\Y$ (mostly $\Y=\{0,1\}$) a (binary) label space.
We assume that data is generated by some distribution $P$ over $\X\times \Y$ and let $P_\X$ denote the marginal of $P$ over $\X$.
Further, we use notation $\mu_P(x) = \Pr_{(x,y)\sim P}[y = 1 \mid x]$ to denote the \emph{regression function} of $P$.
We say that the distribution has \emph{deterministic labels} if $\mu_P(x) \in \{0,1\}$ for all $x\in\X$.
A \emph{classifier} or \emph{hypothesis} is a function $h:\X\to\Y$.
We let $\F$ denote the set of all Borel measurable functions
from $\X$ to $\Y$ (or all functions in case of a countable domain). 
A \emph{hypothesis class} is a subset of $\F$, often denoted by $\H\subseteq \F$.

The quality of prediction of a hypothesis on an input/output pair $(x,y)$ is measured by a \emph{loss function} $\lo:(\F\times \X \times \Y) \to \reals$.
For classification problems, the quality of prediction is typically measured with the \emph{binary} or \emph{classification loss}: 
\[
\blo(h, x, y) = \indct{h(x) \neq y}, 
\]
where $\indct{\alpha}$ denotes the indicator function for predicate $\alpha$.

We denote the \emph{expected loss} (or \emph{true loss}) of a hypothesis $h$ with respect to the distribution $P$ and loss function $\lo$ by $\Lo{P} (h) = \Ex_{(x,y)\sim P} [\lo(h , x, y)]$. 
In particular, we will denote the true binary loss by $\bLo{P}(h)$.
The \emph{Bayes classifier} is a (in general not unique) classifier  which has the minimal true loss with regard to $P$.
We denote the Bayes classifier with respect to the binary loss as $\bh{P}$ and it's loss, the \emph{Bayes risk} by $\BLo{P} = \bLo{P}(h_P^B)$

The \emph{empirical loss} of a hypothesis $h$ with respect to loss function $\lo$ and a sample $S = ((x_1, y_1), \ldots, (x_n,$ \newline$y_n))$ is defined as $\Lo{S}(h) = \frac{1}{n}\sum_{i=1}^n \lo(h, x_i, y_i)$.

A \emph{learner} $\A$ is a function that takes in a finite sequence of labeled instances $S = ((x_1, y_1), \ldots, (x_n, y_n))$ and outputs a hypothesis $h = \A(S)$.
The following notion of a \emph{consistent learner} captures a basic desirable property: as the learner sees larger and larger samples from the data-generating distribution, the loss of the learner's output should converge to the Bayes risk.
\begin{definition}[Consistency]
 We say that a learner $\A$ is \emph{consistent} with respect to a set of distributions $\P$ if, for every $P\in\P$, every $\epsilon,\delta >0$ we have there is a sample-size $n(P, \epsilon, \delta)$ such that, for all $n \geq n(P, \epsilon, \delta)$, we have
 \[
  \Pr_{S\sim P^n}\left[ \Lo{P}(\A(S)) \leq \BLo{P} + \epsilon \right] \geq 1-\delta
 \]
 We say that $\A$ is \emph{universally consistent}, if $\A$ is consistent with respect to the class of all data-generating distributions.
\end{definition}

\subsection{(Adversarially) robust loss}
We consider the most commonly used notion of an (adversarially) robust loss \citep{MontasserHS19,Kamalika2019arxiv}.
For a point $x\in \X$, we let $\B_r(x)$ denote the (open) ball of radius $r$ around $x$.
We then define the robust loss as:
\[
 \rlo{r}(h, x, y) = \indct{\exists z\in \B_{r} ~:~ h(z) \neq y}.
\]
and we let $\rLo{P}{r}(h)$ denote the expected robust loss of $h$.

As has been done in the literature, we decompose the robust loss into error and margin areas \citep{ZhangYJXGJ19, AshtianiPU20}: We have $\rlo{r}(h, x, y) = 1$ if and only if $h$ makes a mistake on $x$ with respect to label $y$, or, there is an $r$-close instance $z\in \B_r(x)$ that $h$ labels different than $x$, that is, $x$ is $r$-close to $h$'s decision boundary. 

The first condition holds when $(x,y)$ falls into the \emph{error region}, 
$\err{h} = \{(x,y) \in X\times Y)~\mid~ h(x)\neq y\}.$
The second condition holds when $x$ lies in the \emph{margin area} of $h$. 
We define the \emph{margin area} of $h$, as the subset $\mar{r}{h}\subseteq X$  defined by
\[
\mar{r}{h} =  \{x\in \X ~\mid~ \exists z\in\B_r(x): h(x)\neq h(z)\}
\]

We can define notions of a Bayes classifier, and consistency of a learner $\A$ with respect to the robust loss analogously to these notions for the binary loss.
We will denote the robust-Bayes classifier by $h^{rB}_P$ and the robust-Bayes risk by $\rBLo{P} = \rLo{r}{P}(h^{rB}_P)$.
We will often simply refer to the Bayes predictors as the $0/1$-optimal or the $r$-robust optimal predictors.
We note that these optimal predictors are not unique, in particular in the case that the support of the marginal $P_\X$ does not cover the full space. For example, if the data-generating distribution is supported on a lower dimensional manifold, then a $0/1$-optimal predictor is only uniquely determined on that manifold (and even there only with exception of $0$-mass subsets and not in areas with $\mu_P(x) = 0.5)$. Similarly, $r$-robust optimality can be fulfilled by various predictors if the data-generating distribution is strongly separable (see Definition \ref{d:marginrate}).
Explicit forms (analogous to the $0/1$-Bayes being a threshold of the regression function) of the $r$-robust optimal predictor have been derived in the literature (\cite{YangRWC20}).

\section{Robustness and Margins}\label{s:robustnessandmargins}

In this section, we investigate implications of the existence of a low robust-loss classifier and differences between low binary and low robust loss.
We show that the optimal classifiers with respect to these losses can differ significantly, implying that optimizing for one can strongly hurt performance with respect to the other. We then analyze the relationship between the existence of robust classifiers and margin (or separability) properties of the underlying data-generating process.  We argue that, while separability implies the existence of robust classifiers with respect to \emph{some} robustness parameter $r$, using a fixed robustness parameter can again contravene the intention of deriving predictors that are both accurate and as robust as possible.

\subsection{Binary optimal versus robust optimal}\label{ss:binaryversusrobust}
It has been shown before that the definition of the $r$-robust loss implies that, even in situations where the $0/1$-Bayes risk is $0$, that is where the labels are deterministic, no classifier may have $0$ robust loss \citep{diochnos2018adversarial, TsiprasSETM19, ZhangYJXGJ19, GourdeauKK019}:
The existence of a classifier $h$ with $\rLo{r}{P}(h) = 0$ implies that the distribution is \emph{separable}, that is, $P_\X$ is supported on $r$-separated regions of $\X$ and these regions are label-homogeneous. Namely, $\rLo{r}{P}(h) = 0$ implies $\bLo{P}(h) = 0$, which means that the labeling of $P$ is deterministic. In addition, we must have $P(\mar{r}{h}) = 0$, which implies that any point $x$ in the support of $P_\X$ with $h(x) = 1$ has distance at least $2r$ from any point in that support with $h(x) = 0$. In this case, this function $h = \bh{P} = \rbh{r}{P}$ is optimal with respect to both losses. 

In this subsection we inspect the potential tension between robustness and accuracy with an emphasis on the role that stochasticity of the labels play in this phenomenon.
We start by observing that even if the labels are not necessarily deterministic, the optimal robust loss is strictly larger than the optimal $0/1$-loss if and only if a Bayes classifier does not have a strict margin.
\begin{theorem}\label{thm:idenitcalIFmargin}
 We have $\rBLo{P} = \BLo{P}$ if and only if there exists a $0/1$-optimal classifier $\bh{P}$ with $$P_\X(\mar{r}{\bh{P}}) = 0.$$
\end{theorem}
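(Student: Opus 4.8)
The plan is to prove both directions directly, using the decomposition of the robust loss into the error region and the margin area. For the easy ($\Leftarrow$) direction, suppose there is a $0/1$-optimal classifier $\bh{P}$ with $P_\X(\mar{r}{\bh{P}}) = 0$. Then for $(x,y)\sim P$ the event that $\rlo{r}(\bh{P},x,y)=1$ is contained in the union of the error region $\err{\bh{P}}$ and $\{x \in \mar{r}{\bh{P}}\}$, so $\rLo{r}{P}(\bh{P}) \le \bLo{P}(\bh{P}) + P_\X(\mar{r}{\bh{P}}) = \BLo{P} + 0$. Since $\rlo{r} \ge \blo$ pointwise (any $z$ witnessing a robust error at radius $r$ includes $z=x$ when $h(x)\neq y$, and more generally $\rLo{r}{P}(h) \ge \bLo{P}(h) \ge \BLo{P}$ for every $h$), we get $\rBLo{P} \le \rLo{r}{P}(\bh{P}) \le \BLo{P} \le \rBLo{P}$, forcing equality throughout.

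For the ($\Rightarrow$) direction, assume $\rBLo{P} = \BLo{P}$ and let $\rbh{r}{P}$ be an $r$-robust optimal classifier. From $\rlo{r} \ge \blo$ pointwise we get $\bLo{P}(\rbh{r}{P}) \le \rLo{r}{P}(\rbh{r}{P}) = \rBLo{P} = \BLo{P}$, so $\rbh{r}{P}$ is in fact also $0/1$-optimal; call it $\bh{P} := \rbh{r}{P}$. Now decompose: $\rLo{r}{P}(\bh{P}) = \Ex_{(x,y)\sim P}[\indct{\bh{P}(x)\neq y \lor x\in\mar{r}{\bh{P}}}]$. Since $\{\bh{P}(x)\neq y\}$ already contributes exactly $\bLo{P}(\bh{P}) = \BLo{P}$, and the total is also $\BLo{P}$, the ``extra'' mass from the margin area that lies \emph{outside} the error region must be zero: $P\big[x\in\mar{r}{\bh{P}} \text{ and } \bh{P}(x)=y\big] = 0$. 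This is not quite $P_\X(\mar{r}{\bh{P}})=0$ yet, because part of the margin area could hide inside the error region. The key observation to close this gap is that on the error region a $0/1$-optimal classifier can be freely \emph{relabeled}: define $\tilde h$ to agree with $\bh{P}$ outside $\mar{r}{\bh{P}}$ but flip labels appropriately on $\mar{r}{\bh{P}}$ so as to eliminate the margin there — concretely, since $\bh{P}$ is already as accurate as possible, any pointwise modification of $\bh{P}$ on a set where it disagrees with the Bayes-optimal label cannot decrease accuracy, and we exploit determinism-free optimality by arguing directly about the regression function $\mu_P$. I expect this is the main obstacle: ensuring that the margin contribution truly can be removed on the error-region part as well, i.e. producing a genuinely margin-free $0/1$-optimal classifier rather than one that is merely margin-free off its error region.

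To handle that obstacle cleanly, I would instead pick $\bh{P}$ to be the margin-canonical Bayes classifier from Section~\ref{s:relaxedmargins} (or argue via $\X^0, \X^1$): the set $\mar{r}{\bh{P}}$ then consists of points within distance $r$ of the topological boundary between $\X^0$ and $\X^1$, and one shows $P_\X$ of this set equals $\Phi_P(r)$; combining with the relabeling argument and the equality $\rBLo{P}=\BLo{P}$ pins this quantity to $0$. Alternatively — and this is probably the slicker route — I would argue the contrapositive of $(\Rightarrow)$: if \emph{every} $0/1$-optimal classifier has $P_\X(\mar{r}{\cdot}) > 0$, then for every $0/1$-optimal $h$ there is positive mass of points $x$ with $\mu_P(x)\notin\{0,1\}$ lying within $r$ of a label change, each contributing strictly more to $\rLo{r}{P}(h)$ than to $\bLo{P}(h)$ (since at such an $x$ there is a positive-probability disagreement with \emph{some} nearby label regardless of how $h$ is set there), and non-optimal classifiers already have $\rLo{r}{P} \ge \bLo{P} > \BLo{P}$; hence $\rBLo{P} > \BLo{P}$. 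The routine part is the measure-theoretic bookkeeping (Borel measurability of $\mar{r}{h}$, handling the $\mu_P = 1/2$ set of measure zero, breaking ties), which I would state but not belabor.
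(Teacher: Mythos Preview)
Your $(\Leftarrow)$ direction is correct and matches the paper. For $(\Rightarrow)$, the paper takes exactly your ``slicker route'' --- the contrapositive --- but your sketch of that contrapositive contains a genuine error. You assert that if every $0/1$-optimal $h$ has $P_\X(\mar{r}{h})>0$, then there is positive mass of points $x$ in the margin with $\mu_P(x)\notin\{0,1\}$. This is false: take $P_\X$ uniform on two points $x_0,x_1$ at distance less than $r$ with deterministic labels $0$ and $1$; every $0/1$-optimal classifier has both points in its margin area, yet $\mu_P\in\{0,1\}$ everywhere. Your parenthetical justification (``positive-probability disagreement with some nearby label'') is also off: the strict gap does not come from label stochasticity.

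The correct (and simpler) observation is the one the paper uses: if $x\in\mar{r}{h}$, then by definition $h$ attains \emph{both} values $0$ and $1$ inside $\B_r(x)$, so $\rlo{r}(h,x,y)=1$ for \emph{every} $y\in\{0,1\}$. Hence the conditional robust loss at such $x$ is exactly $1$. Meanwhile, if $h$ is $0/1$-optimal, the conditional binary loss at $x$ is $\min(\mu_P(x),1-\mu_P(x))\le 1/2$. Integrating over $\mar{r}{h}$ gives $\rLo{r}{P}(h)-\bLo{P}(h)\ge \tfrac{1}{2}\,P_\X(\mar{r}{h})>0$ for every $0/1$-optimal $h$, and $\rLo{r}{P}(h)\ge\bLo{P}(h)>\BLo{P}$ for every non-optimal $h$; together these give $\rBLo{P}>\BLo{P}$. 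No appeal to $\mu_P\notin\{0,1\}$, no relabeling construction, and no margin-canonical Bayes is needed. Your first proposed fix (flip labels on the margin to ``eliminate the margin there'') would not work anyway: relabeling points inside $\mar{r}{h}$ generally just moves the decision boundary and creates a new margin elsewhere.
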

\begin{proof}
 We first assume that $P_\X(\mar{r}{h}) > 0$ for all classifiers $h$ that are $0/1$-optimal. We fix one of them and denote it by $\bh{P}$. Then $\rLo{r}{P}(\bh{P}) > \Lo{P}(\bh{P}) = \BLo{P}$, since on every point in its margin area, $\bh{P}$ suffers binary loss at most $0.5$, while it suffers robust loss $1$. Outside the margin area the loss contributions are identical for both loss functions. Furthermore, for any classifier $h$ that is not $0/1$-optimal, we have $\rLo{r}{P}(h) \geq \bLo{P}(h) > \BLo{P}$. Thus, independently of whether an optimal robust classifier $\rbh{r}{P}$ is also $0/1$-optimal or not, we have
 \[
  \rBLo{P} = \rLo{r}{P}(\rbh{r}{P}) > \BLo{P}
 \]
 As for the other direction, if there is a $0/1$-optimal classifier $\bh{P}$ with $P_\X(\mar{r}{\bh{P}}) = 0$, then it follows immediately, that this classifier is also optimal with respect to the robust loss and its robust loss is identical to its binary loss. Thus $\rBLo{P} = \BLo{P}$.
\end{proof}

Moreover, we will now see, that if the data-generating distribution does not have a margin in the above strong sense, 
then the optimal classifiers with respect to $0/1$-loss and $r$-robust loss can differ significantly \emph{as functions}. The construction for the below result has (in very similar form) appeared in earlier work \citep{ZhangYJXGJ19}. 

\begin{theorem}\label{thm:differ}
Let $r>0$ be a robustness parameter. There exist distributions $P$ such that any predictors $\bh{P}$ and $\rbh{r}{P}$ that are optimal with respect to $0/1$-loss and $r$-robust loss respectively, satisfy
$
  P_\X[\bh{P} ~\Delta~ \rbh{r}{P}] ~=~ \frac{1}{2} ,
$
where 
$
\bh{P} ~\Delta~ \rbh{r}{P} ~=~ \{x \in \X ~\mid \bh{P}(x) ~\neq~ \rbh{r}{P}(x)\}
$
is the set of domain points on which the two optimal classifiers differ.
\end{theorem}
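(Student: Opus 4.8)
\emph{Proof plan.} The plan is to exhibit a single explicit, stochastically labeled distribution on the real line $\X=\reals$ for which the $0/1$-optimal and $r$-robust-optimal classifiers are forced to disagree on exactly half of the probability mass; this is a variant of the ``robustness at odds with accuracy'' construction \citep{ZhangYJXGJ19}. Fix the given $r>0$ and a small $\gamma\in(0,1/2)$, let $P_\X$ put mass $\tfrac12$ on $u_0:=0$ and mass $\tfrac12$ on $u_1:=r$ (as point masses, or, if a full-dimensional example is preferred, as small balls of radius $\eta$ around these points with $\eta$ small enough that the arguments below are unaffected), and let the conditional label be $0$ with probability $\tfrac12+\gamma$ at $u_0$ and $1$ with probability $\tfrac12+\gamma$ at $u_1$. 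The first step is to pin down the $0/1$-Bayes classifier: since $\gamma>0$, every $0/1$-optimal classifier must satisfy $\bh{P}(u_0)=0$ and $\bh{P}(u_1)=1$ on the support, and $\BLo{P}=\tfrac12-\gamma$ (values of $\bh{P}$ off the support are irrelevant since $P_\X$ is concentrated on $\{u_0,u_1\}$).

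The second step is to analyze the robust loss. The key observation is that if a classifier $h$ is \emph{not} constant on $\B_r(x)$ then it realizes both labels on $\B_r(x)$, hence $\rlo{r}(h,x,y)=1$ for \emph{every} $y$. Consequently the expected robust-loss contribution of a support point $u_i$ is $1$ unless $h$ is constant (say $\equiv c_i$) on $\B_r(u_i)$, in which case it equals $\Pr[y\neq c_i\mid u_i]\ge\tfrac12-\gamma$. Since $u_1-u_0=r<2r$, the balls $\B_r(u_0)=(-r,r)$ and $\B_r(u_1)=(0,2r)$ overlap, so ``$h$ constant on both'' forces $c_0=c_1=:c$, and then $\rLo{r}{P}(h)=\tfrac12\Pr[y\neq c\mid u_0]+\tfrac12\Pr[y\neq c\mid u_1]=\tfrac12$ for either $c\in\{0,1\}$; if instead $h$ is non-constant on at least one of the two balls, then $\rLo{r}{P}(h)\ge\tfrac12\cdot1+\tfrac12(\tfrac12-\gamma)=\tfrac34-\tfrac{\gamma}{2}>\tfrac12$. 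Hence $\rBLo{P}=\tfrac12$, and on the support every $r$-robust-optimal classifier is constant, equal to $0$ everywhere or $1$ everywhere.

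The final step reads off the symmetric difference: $\bh{P}$ takes value $0$ at $u_0$ and $1$ at $u_1$, while any $\rbh{r}{P}$ is constant on $\{u_0,u_1\}$, so it disagrees with $\bh{P}$ on exactly one of these two equal-mass points, giving $P_\X[\bh{P}\,\Delta\,\rbh{r}{P}]=\tfrac12$ for every admissible pair of optimal classifiers (off the support $P_\X$ assigns no mass, so those values do not matter). The step I expect to be the real obstacle is the lower bound that no classifier beats robust loss $\tfrac12$: this is where the geometry enters, through the fact that non-constancy on an $r$-ball already incurs the full margin penalty $1$, together with the overlap $u_1-u_0<2r$ that couples the two clusters. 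The rest, including checking that smoothing the point masses into small $\eta$-balls leaves this overlap argument intact, is routine.
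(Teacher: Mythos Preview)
Your argument is correct and follows essentially the same two-point construction as the paper. The paper's proof is slightly simpler: it takes two points $x_0,x_1$ at distance strictly less than $r$ with \emph{deterministic} labels $0$ and $1$, so the robust loss at each point is $\{0,1\}$-valued and the lower bound $\rBLo{P}\ge\tfrac12$ is immediate (you cannot have $h\equiv 0$ on $\B_r(x_0)$ and $h\equiv 1$ on $\B_r(x_1)$ simultaneously). Your stochastic labeling with parameter $\gamma$ is an unnecessary complication here---it forces the extra case analysis on constancy per ball---but it does no harm, and your lower-bound argument is in fact spelled out more carefully than the paper's. One cosmetic point: placing the points at distance exactly $r$ (with open balls) means $u_1\notin\B_r(u_0)$, so you correctly rely on the \emph{overlap} $\B_r(u_0)\cap\B_r(u_1)\neq\emptyset$ rather than containment; the paper sidesteps this by taking the distance strictly less than $r$.
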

\begin{proof}
 We consider a distribution $P$, where $P_\X$ is supported (uniformly) on just two points $x_0$ and $x_1$ at distance less than $r$ from each other.
 $x_0$ is always generated with label $0$ and $x_1$ is always generated with label $1$. Clearly, the $0/1$-optimal classifier $\bh{P}$ labels accordingly: $\bh{P}(x_0) = 0$ and $\bh{P}(x_1) = 1$, resulting in $\bLo{P}(\bh{P}) = 0$. However, this classifier has largest possible $r$-robust loss: $\rLo{r}{P}(\bh{P}) = 1$, since both points are at distance less than $r$ from a point that $\bh{P}$ labels differently. On the other hand, any constant function $h_c$ has robust loss $\rLo{r}{P}(h_c) = 1/2$, since it's margin has weight $0$ and it mislabels with probability $1/2$. This is optimal with respect to the $r$-robust loss. Thus, we showed that
 $
  P_\X[\bh{P} ~\Delta~ \rbh{r}{P}] ~=~ \frac{1}{2}.
$
\end{proof}
This example shows that binary and robust optimal predictors can differ in half the area of the space. In particular, when the robustness parameter is not chosen suitably, optimizing for one can be strongly sub-optimal (incurring regret of $1/2$) for the other.
This means that \emph{any learning method, will be inconsistent with respect to one of the two losses in question}.

Of course, in the above example, the robustness parameter and distribution are constructed to not match suitably. 

\section{Proofs}
In this section, we list the proofs that were omitted from the main part for lack of space.

\subsection{Proofs from Section \ref{ss:choosingr}}

\begin{proof}[Proof of Theorem \ref{thm:chooser}]
 Due to the way we chose the robustness parameter $r$ here, we immediately get
 \[
  \rLo{r}{P}(\bh{P}) \leq \bLo{P}(\bh{P}) + \epsilon =  \BLo{P} + \epsilon
 \]
 since $P(\mar{r}{\bh{P}}) \leq \epsilon$.
 We need to argue, that no other classifier $h$ can have significantly smaller robust loss.
 As in the proof of Theorem \ref{thm:idenitcalIFmargin}, we observe that, we have $\rLo{r}{P}(h) \geq \bLo{P}(h) \geq \BLo{P}$ for any classifier $h$.
 Thus, in particular $\rLo{r}{P}(\rbh{r}{P}) = \rBLo{P} \geq \BLo{P}$, which yields the first claim.
 
 For the second inequality observe that $\bh{P}$ has $r$-robust loss at most $\BLo{P} + \epsilon$ by choice of $r$. Any robust-optimal classifier $\rbh{r}{P}$ therefore has robust loss at most $\BLo{P} + \epsilon$, which implies that its binary loss is bounded by the same quantity.
 
 Now we assume that the labeling of $P$ is deterministic. 
 This implies that  $\bLo{P}(\bh{P}) = 0$, thus $\rLo{r}{P}(\bh{P}) = \P_\X(\mar{r}{\bh{P}})$.
 Let $\rbh{r}{P}$ be a robust-optimal classifier. 
 By definition of being robust-optimal, we have  $\rLo{r}{P}(\rbh{r}{P}) \leq \rLo{r}{P}(\bh{P}) = \P_\X(\mar{r}{\bh{P}}) \leq \epsilon$.
 Thus, in particular $\bLo{P}(\rbh{r}{P}) \leq \epsilon$, which, in the case of deterministic labels implies 
 $P_\X[\bh{P} ~\Delta~ \rbh{r}{P}] \leq \epsilon$.
\end{proof}

\begin{proof}[Proof of Observation \ref{obs:deterministicnecessary}]
 We consider with uniform marginal over two rectangles in $\reals^2$: We set $R_1 = [-2,-1] \times [-1, 1]$ and $R_2 = [1,2] \times [-1,1]$.
 Further, we set the regression function 
 \[
\mu(x_1, x_2) = \left\{\begin{array}{l}
                          \frac{1}{2} + \frac{\epsilon}{2} \text{ if } x_2 \geq 0\\
                          \frac{1}{2} - \frac{\epsilon}{2} \text{ if } x_2 \leq 0\\
                         \end{array}\right.
\]
 Now it follows that a $0/1$-optima predictor is
 $\bh{P} = \indct{x_2 \geq 0}$ while, for any $r\geq\epsilon/2$, we have 
 $\rbh{r}{P} = \indct{x_1 \geq 0}$, thus $P_\X[\bh{P} ~\Delta~ \rbh{r}{P}] = \frac{1}{2}$.
\end{proof}

\subsection{Proof of Theorem \ref{thm:NNwithaugmentation}}
We will employ a similar proof technique as in Chapter 19 of \citep{shalev2014understanding}. In particular, we will employ Lemma 19.2 therein:

\begin{lemma}[Lemma 19.2 in \citep{shalev2014understanding}]\label{lem:cells}
Let $C_1, C_2, \ldots C_t$ be a collection of subsets of some domain set $\X$. Let $D$ be a distribution over $\X$ and $S$ be an iid sample from $P$ of size $n$. Then
\[
 \Ex_{S\sim D^n} \left[ \sum_{i: C_i \cap S = \emptyset} D(C_i) \right] ~\leq~ \frac{t}{n\cdot\e}
\]

\end{lemma}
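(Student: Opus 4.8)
The plan is to prove the bound by a direct expectation computation that reduces to a one-dimensional maximization. First I would rewrite the random quantity inside the expectation as a single sum over all indices, weighted by indicators:
$\sum_{i: C_i \cap S = \emptyset} D(C_i) = \sum_{i=1}^t D(C_i)\,\indct{C_i\cap S = \emptyset}$.
Applying linearity of expectation then gives $\Ex_{S\sim D^n}[\,\cdot\,] = \sum_{i=1}^t D(C_i)\,\Pr_{S\sim D^n}[C_i\cap S = \emptyset]$. A useful feature of this step is that it makes any overlaps among the cells $C_i$ irrelevant: each index contributes to the bound independently of the others, so no disjointness assumption is needed.

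Next I would evaluate each probability in closed form. Writing $p_i = D(C_i)$, the event $C_i \cap S = \emptyset$ is precisely the event that every one of the $n$ \iid draws avoids $C_i$, which by independence of the draws has probability $(1-p_i)^n$. Substituting back, the target expectation equals $\sum_{i=1}^t p_i(1-p_i)^n$, and it only remains to bound this sum by $t/(n\e)$.

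The one substantive step is the termwise bound, which I would obtain from a short single-variable maximization. Using $1 - p \leq \e^{-p}$, each term satisfies $p_i(1-p_i)^n \leq p_i\,\e^{-p_i n}$, so it suffices to bound $f(p) = p\,\e^{-pn}$ on $[0,1]$. Its derivative is $f'(p) = \e^{-pn}(1 - pn)$, which vanishes only at $p = 1/n$, where $f(1/n) = 1/(n\e)$; since this is the unique interior critical point and $f(0)=0$, it is the global maximum on the admissible range. Hence $p_i(1-p_i)^n \leq 1/(n\e)$ for every $i$, and summing over the $t$ cells gives $\sum_{i=1}^t p_i(1-p_i)^n \leq t/(n\e)$, which is the claim.

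I do not anticipate a genuine obstacle: once the expectation is decomposed via linearity and the miss probability is identified as $(1-p_i)^n$, the argument is elementary. The only point demanding a little care is the maximization — specifically verifying that the critical point $p = 1/n$ falls in the feasible interval and is indeed a maximum rather than a minimum — but this is routine calculus, and the uniform per-cell bound $1/(n\e)$ then yields the stated inequality directly.
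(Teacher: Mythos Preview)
Your proposal is correct and is essentially the standard proof of this lemma; the paper itself does not supply a proof but simply cites it as Lemma~19.2 in \citep{shalev2014understanding}, and the argument there proceeds exactly as you outline (linearity of expectation, the identity $\Pr[C_i\cap S=\emptyset]=(1-p_i)^n$, and the termwise bound $p(1-p)^n\leq p\,\e^{-pn}\leq 1/(n\e)$ via single-variable maximization).
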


Recall that, for a labeled sample $S$, the collection 
\[
 S^c = (\B_{c\cdot\rho_S(x_1,y_1)}(x_1, y_1), \ldots, \B_{c\cdot\rho_S(x_n,y_n)}(x_n, y_n))
\]
denotes the \emph{$c$-adaptive robust expansion of $S$}. We will prove the theorem using this expansion for $c = 0.5$, but note, that the proof (and thus the Theorem) holds equally for
\[
 S^{mc} = ((z^1_1, y_1), \ldots, (z^m_1, y_1), \ldots (z^1_n, y_n), \ldots, (z^m_n, y_n)),
\]
the \emph{$m$-sample-$c$-adaptive robust augmentation of $S$}
(where every $z^j_i$ is uniformly sampled from the ball $\B_{c\cdot\rho_S(x_i,y_i)}(x_i)$), for any $m$.

\begin{proof}[Proof of Theorem \ref{thm:NNwithaugmentation}] 
Let $P$ be a distribution over $[0,1]^d\times\{0,1\}$ with deterministic labels and margin rate $\Phi_P(\cdot)$. 
We let $\bh{P}$ be a margin optimal Bayes predictor for $P$. Note that, since the labels of $P$ are deterministic $\bLo{P}(\bh{P}) = 0$.
Further, we let $\epsilon$ and $\delta$ be given and set $r = \Phi_P^{-1}(\epsilon)$ (to mean the largest $r$, such that $\Phi_P(r) \leq \epsilon$). Further, we set $r' = r/3$.

We can now partition the space $[0,1]^d$ into $t= \frac{\sqrt{d}^d}{r'}$ many sub-cubes of side-length $r'/\sqrt{d}$ and thus diameter $r'$. We denote the cells in this partition by $C_1, \ldots, C_t$.
 
We now let $S$ be a labeled sample and let $h_S^c = h_S^{.5}$ be the nearest neighbor classifier on the $.5$-adaptive robust expansion of $S$. We now bound the mass of points $x$ on which $h_S^c$ makes a false classification by noting that $h_S^c(x) \neq \bh{P}(x)$ implies that one of these two conditions hold:\\
\begin{description}
 \item[C1:] $x$ falls into a cell $C_k$ that has empty intersection with the sample $S$
 \item[C2:] there is at least one sample point $(x_i, y_i)\in S$ in the same cell $C_k$ as $x$, and either there exists such an $(x_i, y_i)\in S$ with $y_i \neq \bh{P}(x)$; or we have  $y_i = \bh{P}(x)$ for all $(x_i, y_i)$ in the same cell, but there is another sample point $(x_j, y_j)\in S$ (in a different cell) with $y_j \neq \bh{P}(x)$ and $x$ is closer to the expansion $\B_{c\cdot\rho_S(x_j,y_j)}(x_j, y_j)$ of $x_j$ than to the expansion $\B_{c\cdot\rho_S(x_i,y_i)}(x_i, y_i)$ of $x_i$
\end{description}

If $S$ is an iid sample from $P$, then, by Lemma \ref{lem:cells} the expected mass of points $x$ cells that are not hit by the sample $S$ is bounded by $\frac{t}{n\cdot\e} = \frac{3^d\sqrt{d}^d}{\Phi_P^{-1}(\epsilon)^d\cdot n \cdot \e}$.
By Markov's inequality, this implies
\[
 \Pr_{S\sim P^n} \left[ \sum_{i: C_i \cap S = \emptyset} P_\X(C_i) > \epsilon \right] ~\leq~ \frac{3^d\sqrt{d}^d}{\epsilon \Phi_P^{-1}(\epsilon)^d\cdot n \cdot \e}
\]
Setting this to $\delta$ shows that, with probability at least $1-\delta$ over a sample $S$ of size 
\[
 n \geq \frac{3^d\sqrt{d}^d}{\epsilon \Phi_P^{-1}(\epsilon)^d\cdot \delta \cdot \e}
\]
the mass of points that fall into ``error case'' C1 is bounded by $\epsilon$.
We now argue that the mass of points that fall into ``error case'' C2 is also bounded by $\epsilon$ by showing that such points actually fall into the $r$-margin area of $\bh{P}$ and, by choice of $r$ and by definition of $\Phi_P$, we have $P_\X(\mar{\bh{P}}{r}) \leq \epsilon$.

Consider a point $x$ in case C2. If there exist a point $(x_i, y_i)\in S$ in the same cell as $x$ with $y_i \neq \bh{P}(x)$, then by the choice of the size of the cells $x\in \mar{\bh{P}}{r'} \subseteq \mar{\bh{P}}{r}$. 

Now consider the other sub-case of C2: There exists at least one point $(x_i, y_i)\in S$ in the same cell as $x$ and all points in the same cell as $x$ have label $\bh{P}(x)$. 
But there is another sample point $(x_j, y_j)\in S$ (in a different cell) with $y_j \neq \bh{P}(x)$ and $x$ is closer to the expansion $\B_{c\cdot\rho_S(x_j,y_j)}(x_j, y_j)$ of $x_j$ than to the expansion $\B_{c\cdot\rho_S(x_i,y_i)}(x_i, y_i)$ of $x_i$, where $c= 0.5$.

Recall that $\rho_S(x_j,y_j)$ is the distance between $x_j$ and a point in $S$ of opposite label to $y_j$. 
We now set $\rho = 0.5 \cdot \rho_S(x_j,y_j)$ for short, that is $\rho$ is the radius of the expansion of $(x_j, y_j)$.

Since the cell that $x$ is in also contains $(x_i, y_i)$ and $y_i \neq y_j$ in this sub-case, we know that $2\rho \leq \|x_i-x_j\|$. 
Further, we know $\|x_i-x\| \leq r' = r/3$ since $x_i$ in in the same cell as $x$.

Let $z\in\B_{c\cdot\rho_S(x_j,y_j)}(x_j, y_j)$ be the point in $\B_{c\cdot\rho_S(x_j,y_j)}(x_j, y_j)$ closest to $x$. 
Then, since $x$ is closer to the expansion of $x_j$ than the expansion of $x_i$, we can infer $\|x-z\|\leq r' = r/3$.
This implies $\|z - x_i\|\leq 2r'$.

Now, by the triangle inequality then implies 
\[
 \|x_i - x_j\| \leq \|x_i - z\| + \|z - x_j\| = \|x_i - z\| + \rho,
\]
 thus
 \[
2\rho \leq   \|x_i - x_j\| \leq \|x_i - z\| + \rho 
 \]
which implies
\[
 \rho \leq \|x_i - z\| \leq 2r'.
\]
Now, again invoking the triangle inequality, we can bound the distance between $x$ and $x_j$:
\[
 \|x - x_j\| \leq \|x - z\| + \|z - x_j\| = r' + 2r' = r
\]
Thus, in this case, $x$ also falls into the $r$-margin area of $\bh{P}$ since $\bh{P}(x) \neq \bh{P}(x_j)$.
\end{proof}

\section{Visualizations}\label{s:experiments}
To further validate our proposed adaptive robust data augmentation method, we present a set of illustrative experiments on various synthetic datasets.
To allow for visualizations, we generate data from a ``lower-dimensional manifold'' in two dimensions. 
It has been conjectured that the data being supported on a lower-dimensional manifold is a source of the phenomenon of vulnerability to small perturbations \citep{KhouryHadfield19}. 
Our visualizations in in Figure \ref{fig:allvisualizations} illustrate this phenomenon.

The original support (the data-manifold) of data generating distributions can be seen as the green and blue lines in the first column of Figure \ref{fig:allvisualizations}, blue and green points representing points from the two classes. 
We term our synthetic shapes in Figure \ref{fig:allvisualizations} {\bf Sines, S-figure, NNN, circles, boxes}.
We train a ReLU Neural Network with $2$-hidden layers (of 10 neurons each) data points drawn from these shapes. 
The labeling behavior of the trained network is visualized over the ambient space in red and purple.
The first image in each row depicts the original, labeled data together with the network trained on the original data.

We see in those left-most illustration that without any augmentation, the network's decision boundary is often located close to the data-manifold.
Since the data is supported only on the lower-dimensional manifold, there is no incentive for the decision boundary to keep a distance from the data-manifold.
While the network labels areas on the manifold itself correctly, this behavior leads to the existence of points that are vulnerable to adversarial perturbations: a small deviation away from the data-manifold can lead to a different labeling by the network.

We then augment the training datasets with both fixed and adaptive expansion parameter and train ReLU Neural Networks of the same size on the augmented datasets. The remaining images in each row again illustrate the augmented datasets (green and blue) together with the labeling behaviors of the resulting networks. The last image in each row corresponds to the adaptive augmented data, while the intermediate images correspond to augmentations with increasing, but fixed expansion parameters.

For fixed expansion parameter, we iteratively increase the parameter in a fix sequence, $(0.1, 0.5, 1, 2,....,16)$. 
These expansion parameters were chosen based on the range of the attribute values in the datasets. 
For each sample in a $d$-dimensional dataset, a $d$-dimensional sphere is generated where the radius is the fixed-parameter and the current sample is the center of the sphere. Four new points are then generated in this sphere for each sample. Hence, the training dataset is expanded to four times its original size after fixed-parameter expansion.

\begin{figure*}[b]
\begin{center}

 \includegraphics[width=.21\textwidth]{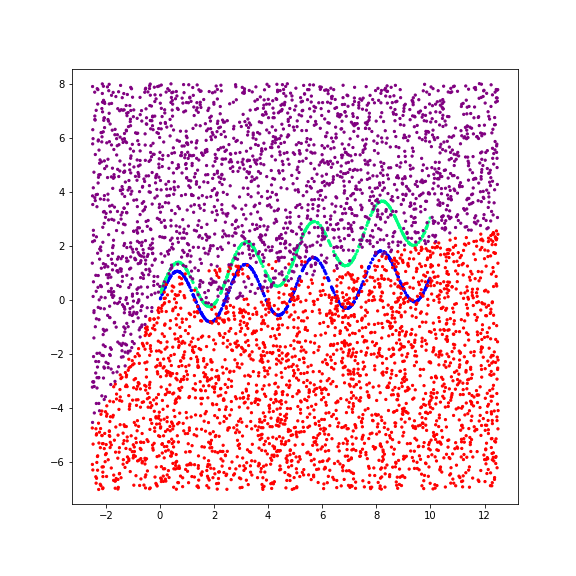}
 \hspace{-.50cm}
 \includegraphics[width=.21\textwidth]{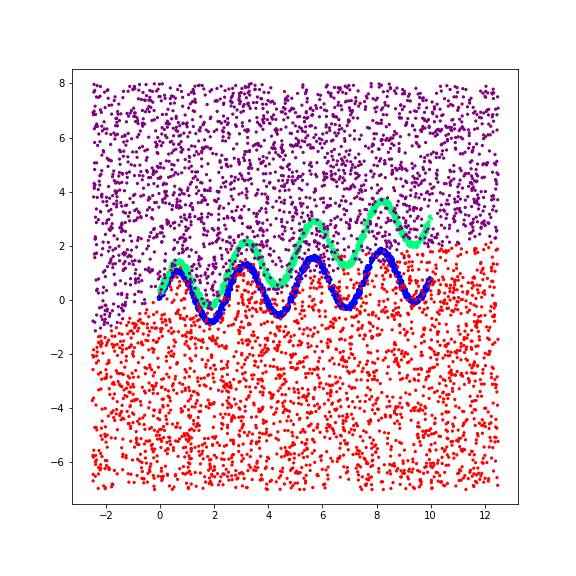}
 \hspace{-.50cm}
 \includegraphics[width=.21\textwidth]{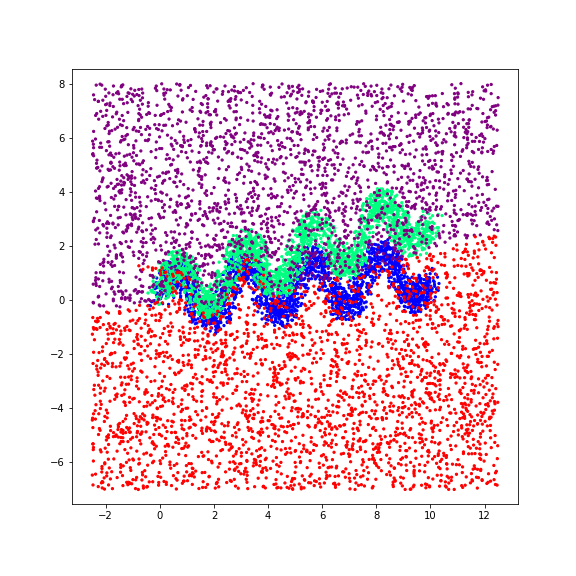}
 \hspace{-.50cm}
 \includegraphics[width=.21\textwidth]{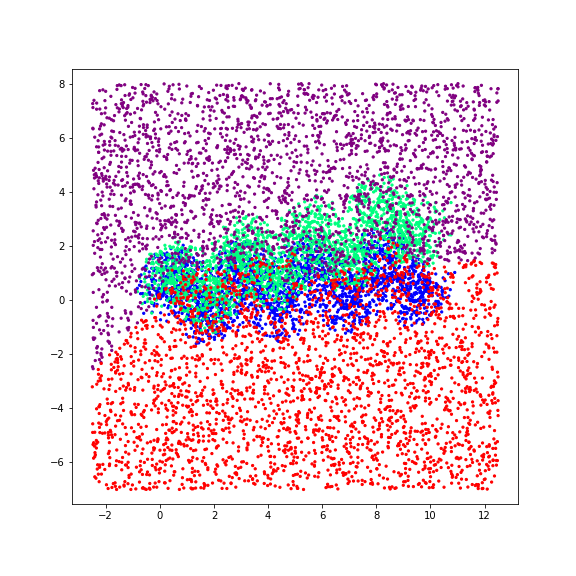}
 \hspace{-.50cm}
 \includegraphics[width=.21\textwidth]{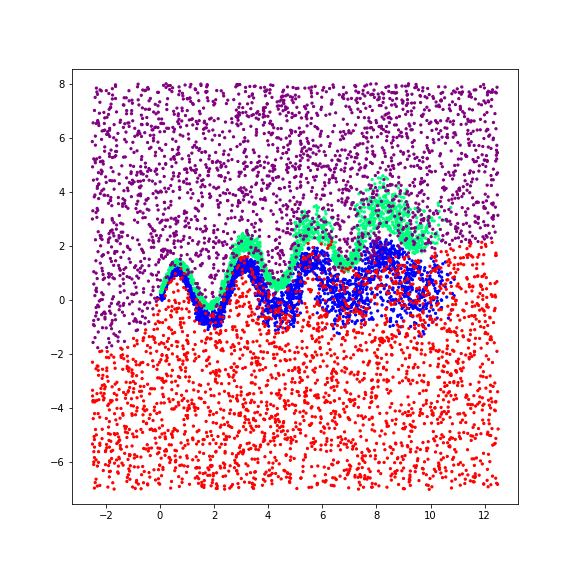}

 \includegraphics[width=.18\textwidth]{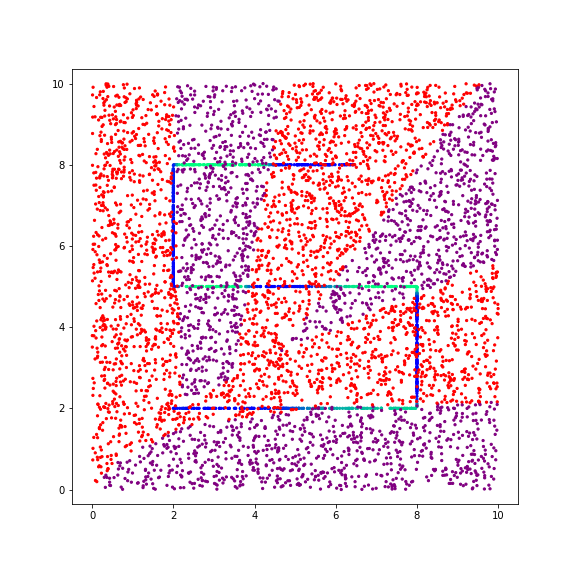}
 \hspace{-.54cm}
 \includegraphics[width=.18\textwidth]{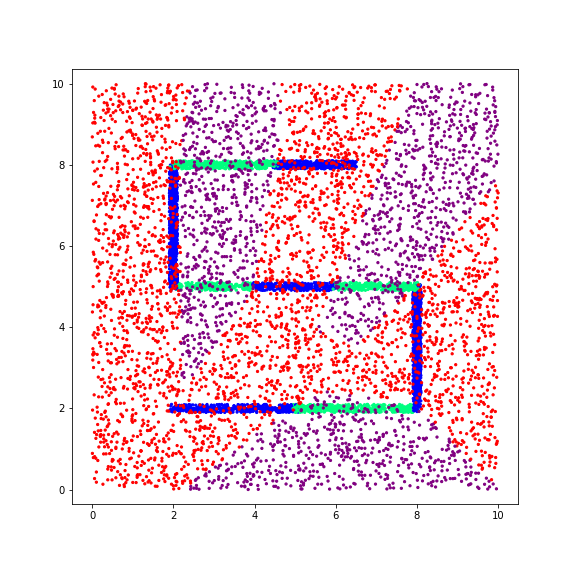}
 \hspace{-.54cm}
 \includegraphics[width=.18\textwidth]{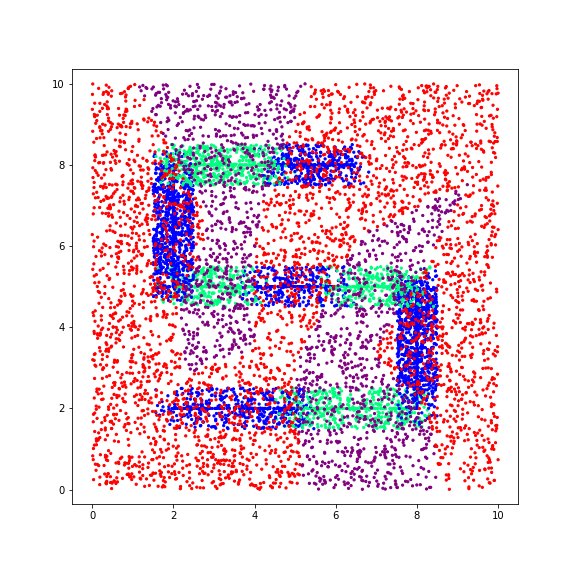}
 \hspace{-.54cm}
 \includegraphics[width=.18\textwidth]{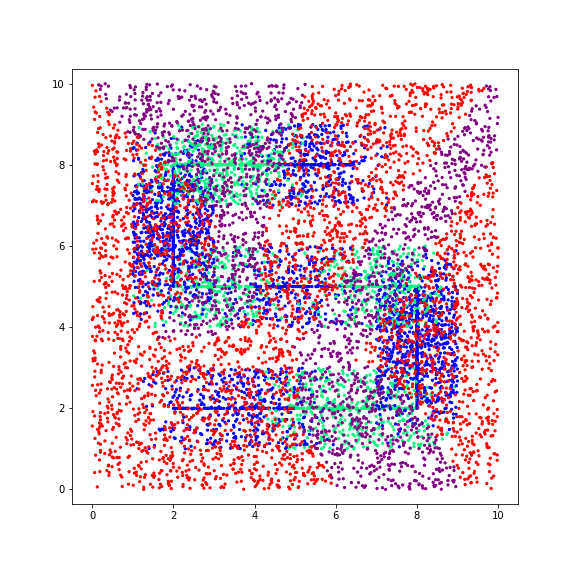}
 \hspace{-.54cm}
 \includegraphics[width=.18\textwidth]{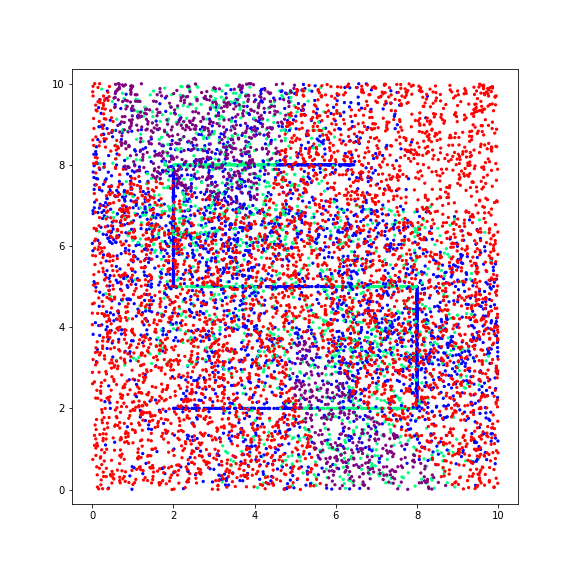}
 \hspace{-.54cm}
 \includegraphics[width=.18\textwidth]{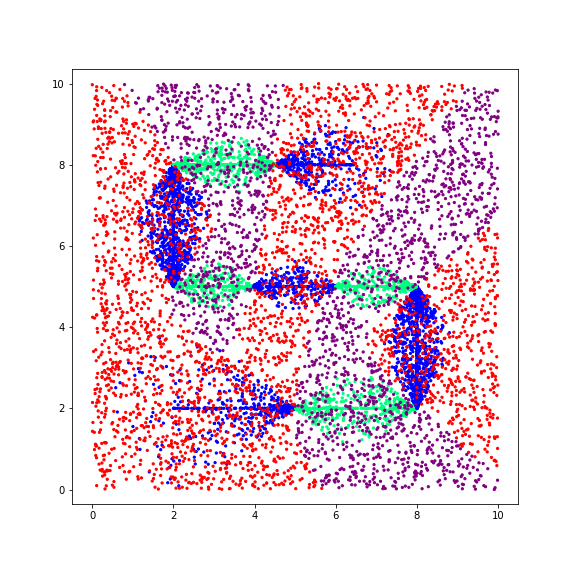}
 
 \includegraphics[width=.18\textwidth]{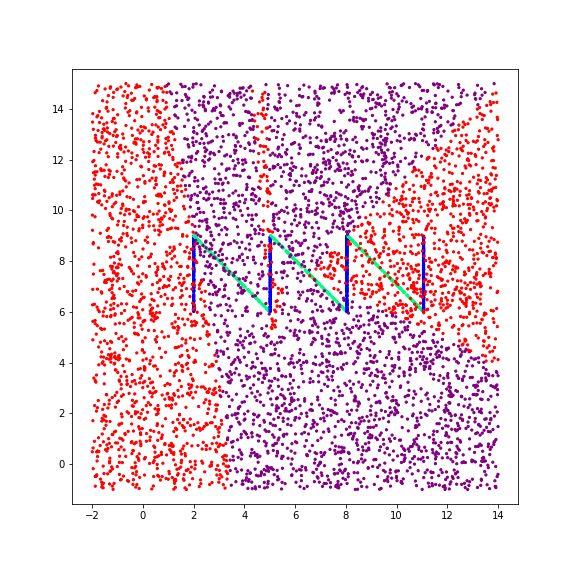}
 \hspace{-.54cm}
 \includegraphics[width=.18\textwidth]{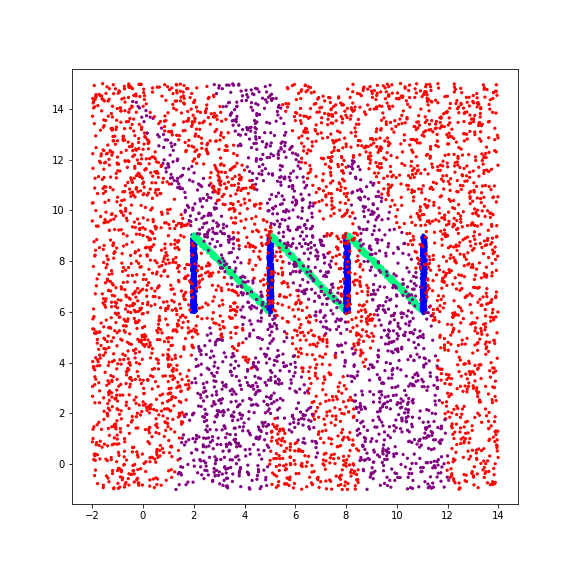}
 \hspace{-.54cm}
 \includegraphics[width=.18\textwidth]{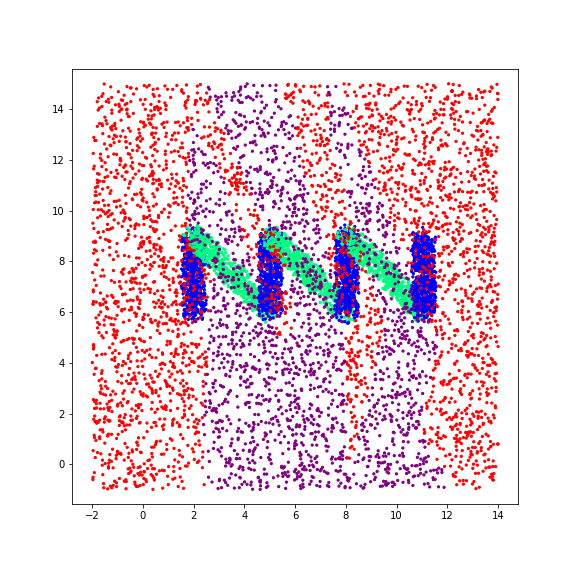}
 \hspace{-.54cm}
 \includegraphics[width=.18\textwidth]{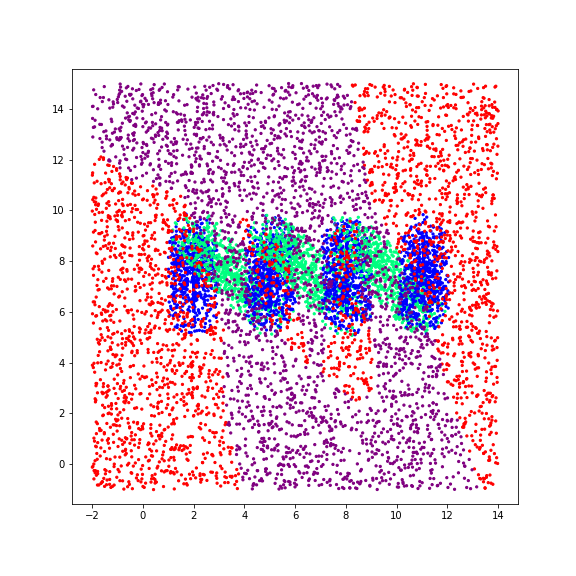}
 \hspace{-.54cm}
 \includegraphics[width=.18\textwidth]{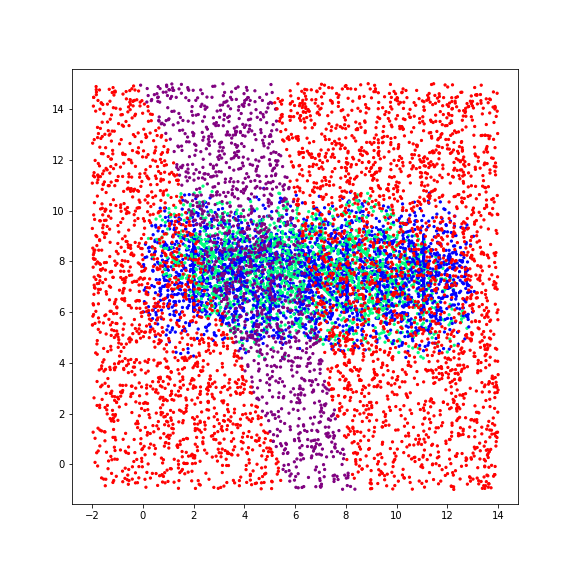}
 \hspace{-.54cm}
 \includegraphics[width=.18\textwidth]{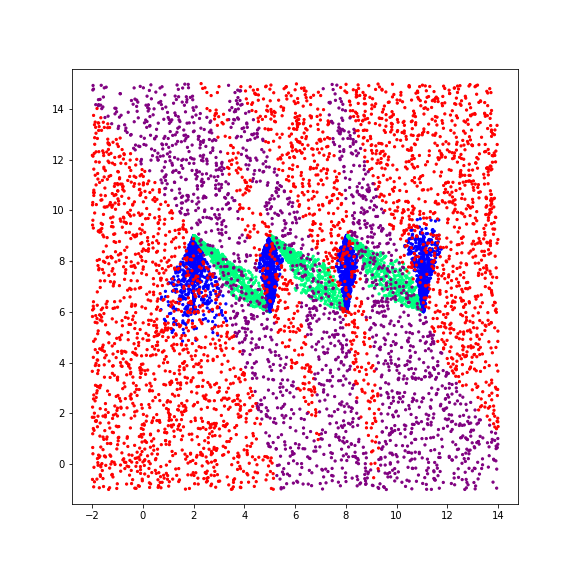}
 
 \includegraphics[width=.155\textwidth]{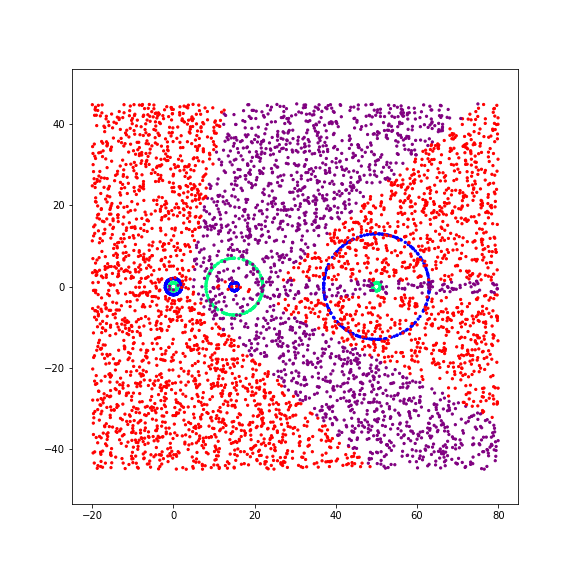}
 \hspace{-.50cm}
 \includegraphics[width=.155\textwidth]{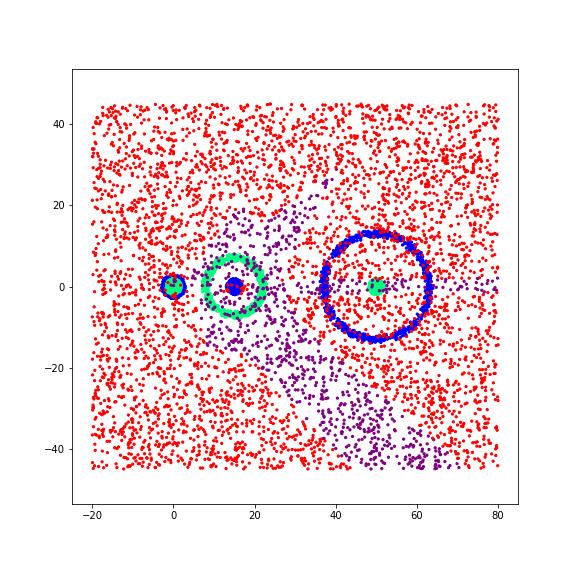}
 \hspace{-.50cm}
 \includegraphics[width=.155\textwidth]{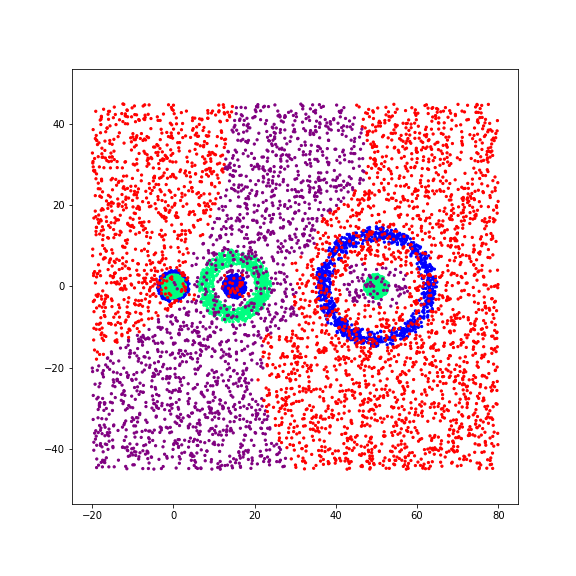}
 \hspace{-.50cm}
 \includegraphics[width=.155\textwidth]{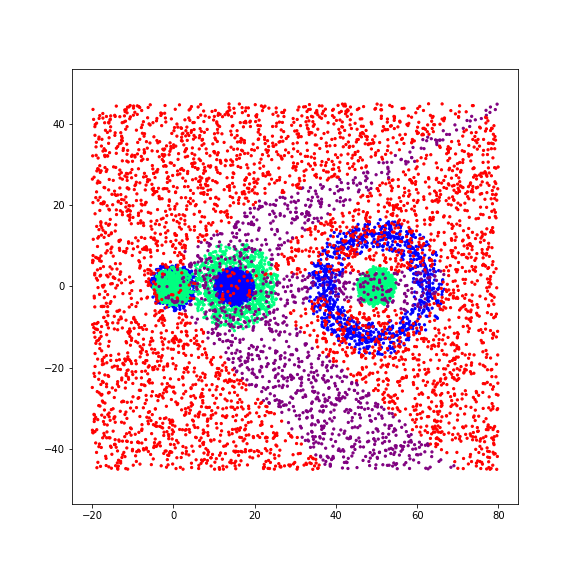}
 \hspace{-.50cm}
 \includegraphics[width=.155\textwidth]{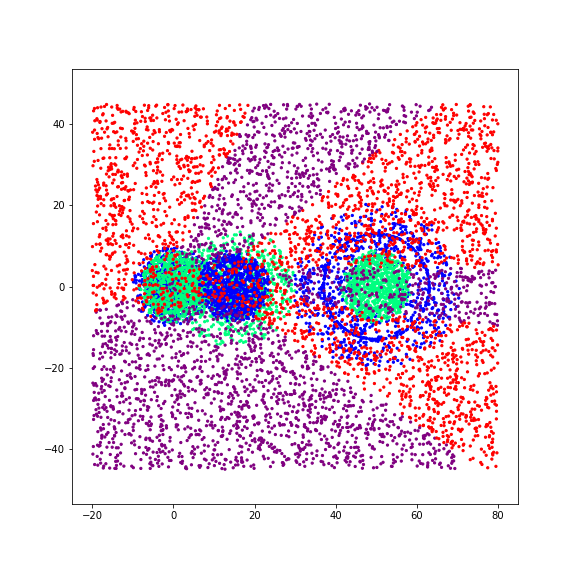}
 \hspace{-.50cm}
 \includegraphics[width=.155\textwidth]{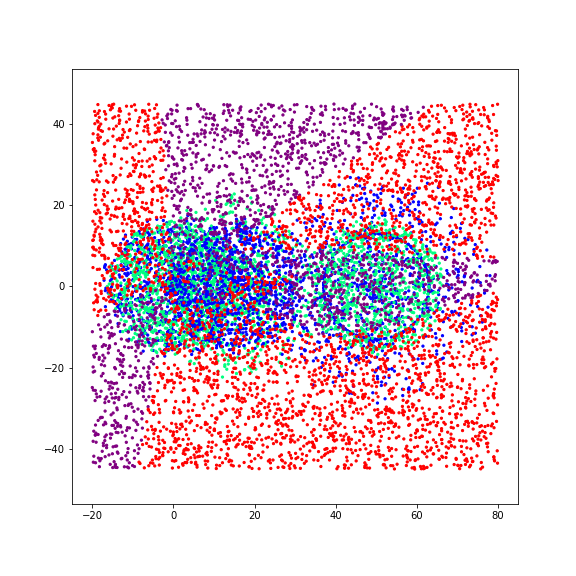}
 \hspace{-.50cm}
 \includegraphics[width=.155\textwidth]{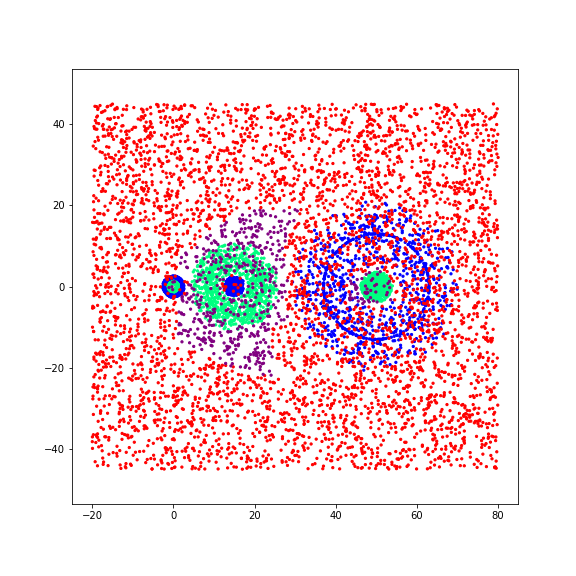}
 
 \includegraphics[width=.155\textwidth]{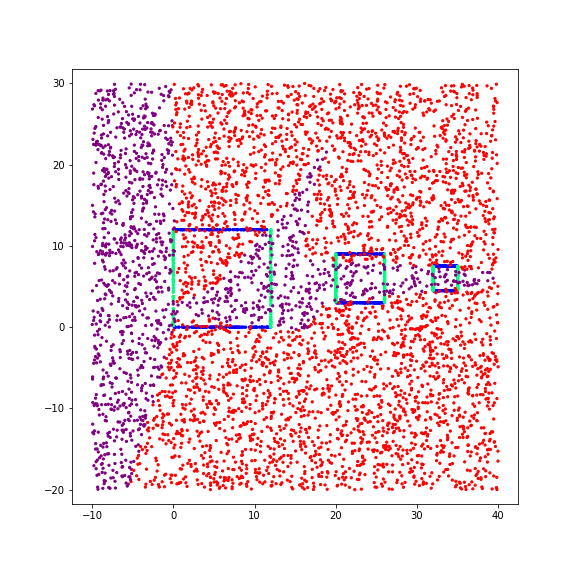}
 \hspace{-.50cm}
 \includegraphics[width=.155\textwidth]{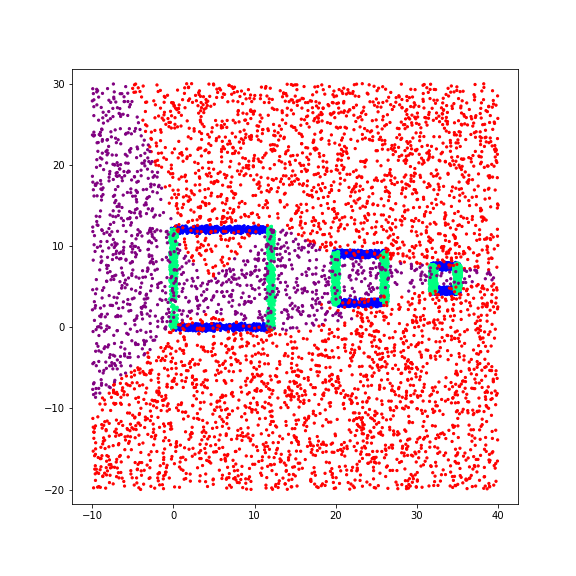}
 \hspace{-.50cm}
 \includegraphics[width=.155\textwidth]{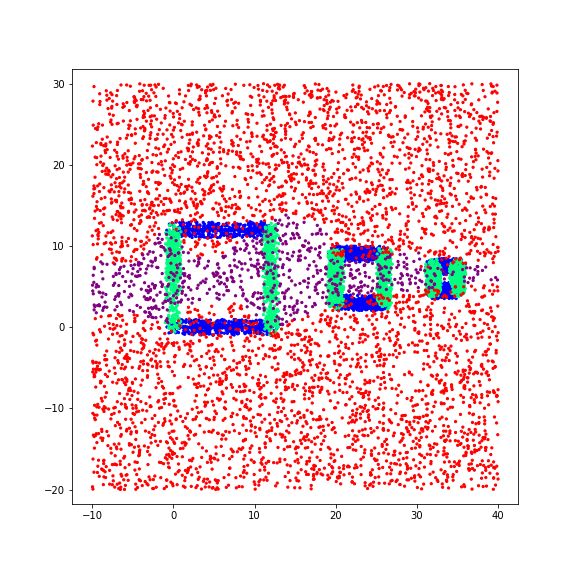}
 \hspace{-.50cm}
 \includegraphics[width=.155\textwidth]{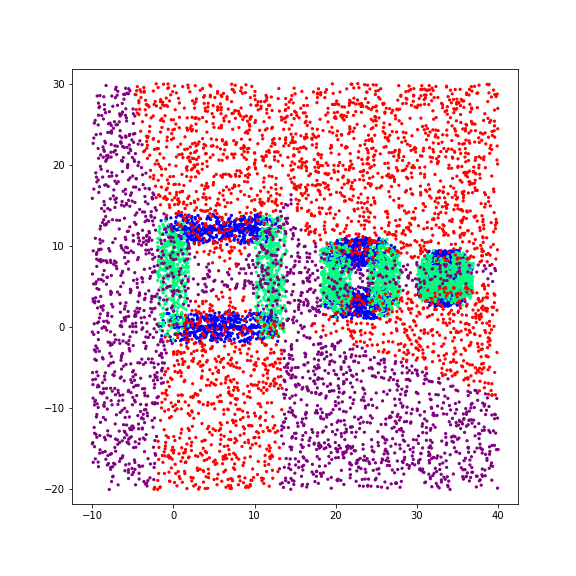}
 \hspace{-.50cm}
 \includegraphics[width=.155\textwidth]{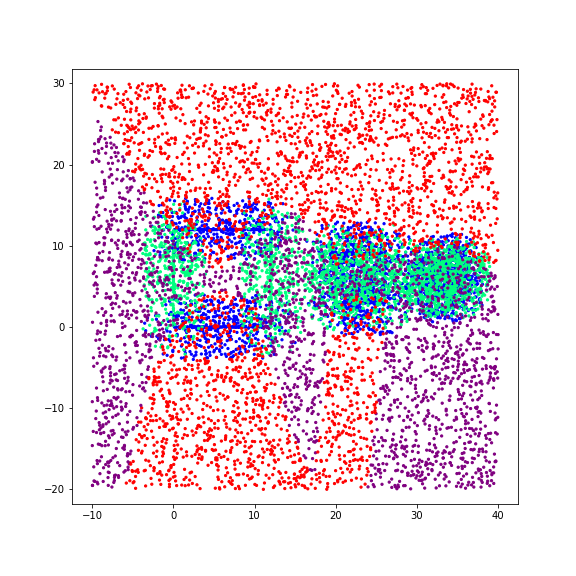}
 \hspace{-.50cm}
 \includegraphics[width=.155\textwidth]{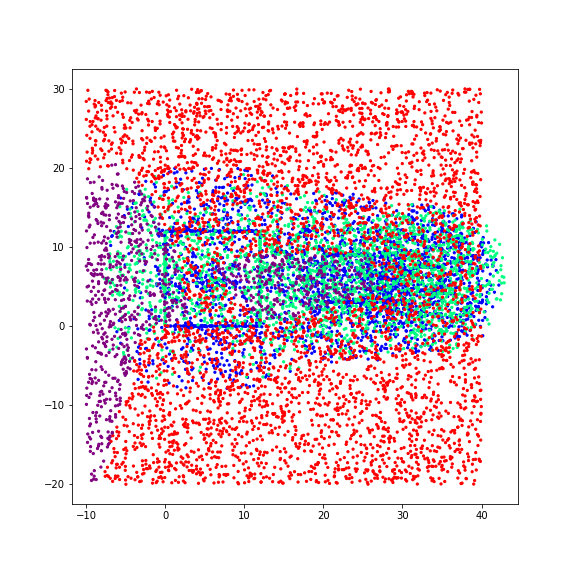}
 \hspace{-.50cm}
 \includegraphics[width=.155\textwidth]{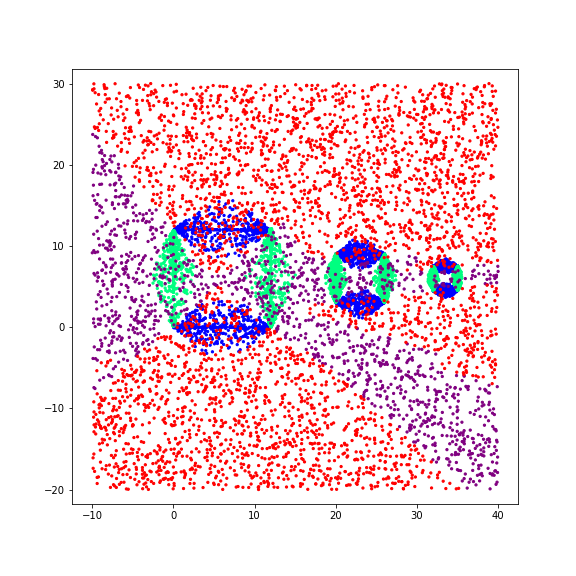}

\end{center}
\caption{ReLU networks trained on data from a one-dimensional manifold in two-dimensional space, labeled using two classes (blue and green here). The various shapes by row: {\bf Sines, S-figure, NNN, circles, boxes}. 
Left-most: original training data; various middle images: training data augmented using increasing expansion parameters; right-most: training data robust-adaptive expanded. 
We use data generated uniformly at random from the ambient space to illustrate the network's labeling (red and purple). Using just original training data, or only slightly augmented data, we observe that the network's decision boundary is often close to the manifold.}
\label{fig:allvisualizations}
\end{figure*}

Analogously we augment the data with an adaptive expansion parameter. The key difference is in the calculation of the radius of the sphere.
A fraction of the distance between the current sample and a nearest neighbor of a different class is used as the radius for the sphere generation. 
Each of the middle columns in Figure \ref{fig:allvisualizations} corresponds to augmentation with a fixed expansion parameter, while the last column shows the $2/3$-adaptive robust augmentation of the training data. The original training dataset contains $1000$ training points and the augmented datasets $5000$ data points each.

For the various networks, we evaluate, binary loss, robust loss with a fixed robustness parameter and the adaptive robust loss.
We also evaluate the adaptive robust loss on the various trained networks. 
To estimate the adaptive robust loss at a point $(x_1, x_2)$, we determine its distance $\rho$ to a point in the dataset with a different label and then generate 10 test points uniformly at random from a ball of radius $0.5\rho$. If one of these gets a different label than $(x_1, x_2)$ by the network (or if the point is mislabeled itself) it suffers adaptive robust loss $1$. The table in Figure \ref{fig:synthetictable} summarizes the binary and adaptive robust losses of the various networks. We see that the adaptive augmentation leads consistently to the lowest binary (always rank 1) and low adaptive robust loss (rank 1 and once rank 2).
This shows that the adaptive augmentation not only is not in conflict with accuracy, but empirically improves accuracy of a trained network.

Finally, we also trained ReLU neural networks on several real-world data sets from the UCI repository. 
For each dataset, we normalized the features to take values in [0,1].
As in the experiments on the synthetic data, we trained the networks on the original data, as well as various augmented datasets, including using the $2/3$-adaptive augmentation. The datasets were split into training and test data with a ratio of $80-20$ respectively.
In Figures \ref{fig:synthetictable} and \ref{fig:UCItable},
we report the binary and robust losses of these networks.
We observe, again, that the robust augmentation promotes the best performance in terms of $0/1$ accuracy.
Additionally, the adaptive robust loss is close to the best adaptive robust loss achieved with a fixed expansion parameter on each dataset. Using the adaptive augmentation can thus serve to save needing to search for an optimal expansion parameter on different tasks.

In summary, our initial experimental explorations here showed that the adaptive augmentation consistently yielded
a robust predictor with best $0/1$-loss. This confirms
the intended design of an adaptive robustness and data augmentation paradigm that avoids the undesirable tradeoffs between robustness and accuracy.

\begin{figure*}[b]
 \includegraphics[trim = 0  100 0 0, clip, width=\textwidth]{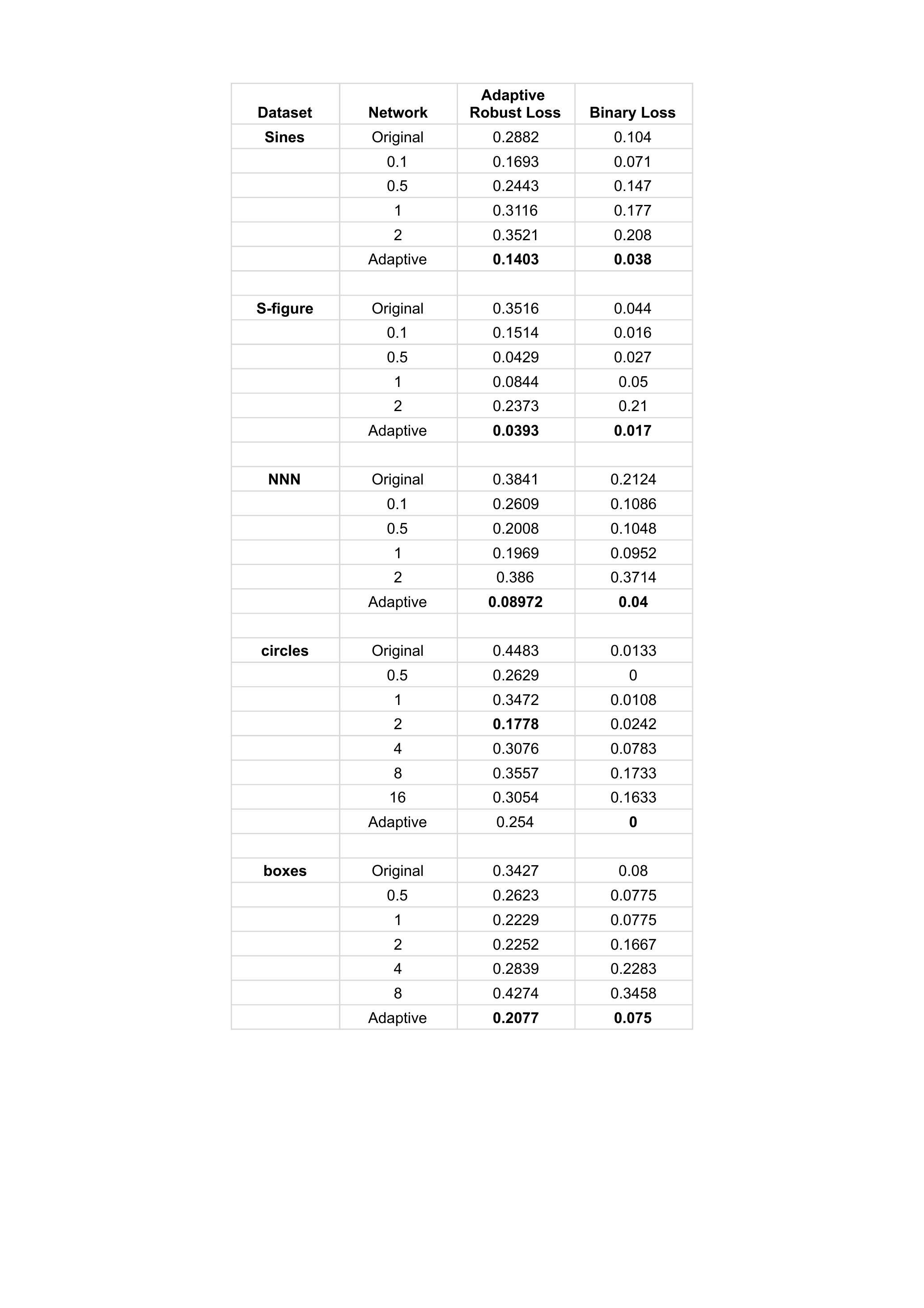}
 \caption{Overview on the binary and adaptive robust losses of the networks trained on trained on the various synthetic datasets with various augmentations.}
 \label{fig:synthetictable}
\end{figure*}

\begin{figure*}[b]
 \includegraphics[trim = 0  100 0 0, clip, width=\textwidth]{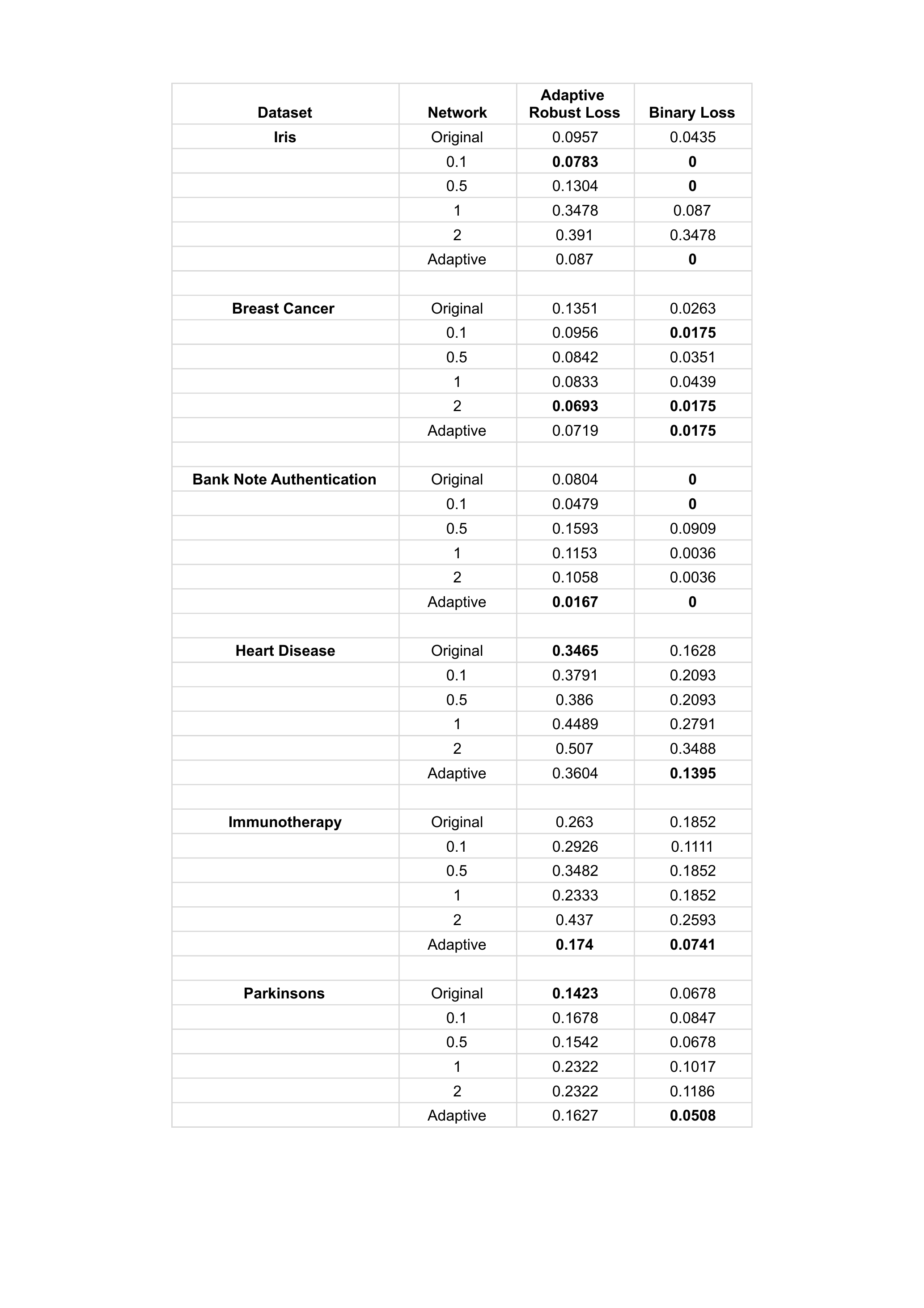}
 \caption{Overview on the binary and adaptive robust losses of the networks trained on trained on the various UCI datasets with various augmentations.}
 \label{fig:UCItable}
\end{figure*}

\section{Concluding Remarks}

In this work, we initiate studying adversarial robustness as an \emph{adaptive requirement}. Through a series of constructions where optimal classifiers for robust loss and $0/1$-loss differ drastically, we motivate re-framing adversarial robustness as a requirement that should be in line with the underlying distribution's margin properties. We propose a formal notion of such an adaptive loss, as well as an accompanying empirical version and implied data-augmentation paradigm.
As a first sound justification of this proposal, we prove that this type of adaptive data-augmentation maintains consistency of a non-parametric method (namely $1$-nearest neighbor classification under deterministic labels). 
We believe this to be a natural and useful take on dealing with the inconsistencies (eg in terms of growing loss-class capacities, computational impossibilities, or diverging Bayes predictors) that earlier theoretical studies on learning under adversarial loss have exhibited. 
\end{document}